\numberwithin{equation}{section}
\theoremstyle{definition}
\newtheorem{definition}{Definition}[section]
\newtheorem{theorem}[definition]{Theorem}
\newtheorem{lemma}[definition]{Lemma}
\title{Hellinger-UCB: A novel algorithm for stochastic multi-armed bandit problem and cold start problem in recommender system}
\author{
  Ruibo Yang \\
  JPMorgan Chase \\
  New York, NY, USA\\
  \texttt{ruyyang0216@gmail.com} \\
   \And
  Jiazhou Wang \\
  Meta \\
  Menlo Park, CA, USA\\
  \texttt{jiazhouwang@meta.com} \\
   \And
  Andrew Mullhaupt \\
  Stony Brook University \\
  Stony Brook, NY, USA\\
  \texttt{doc@zen-pharaohs.com} \\
}
\begin{document}
\maketitle

\begin{abstract}
In this paper, we study the stochastic multi-armed bandit problem, where the reward is driven by an unknown random variable. We propose a new variant of the Upper Confidence Bound (UCB) algorithm called Hellinger-UCB, which leverages the squared Hellinger distance to build the upper confidence bound. We prove that the Hellinger-UCB reaches the theoretical lower bound. We also show that the Hellinger-UCB has a solid statistical interpretation. We show that Hellinger-UCB is effective in finite time horizons with numerical experiments between Hellinger-UCB and other variants of the UCB algorithm. As a real-world example, we apply the Hellinger-UCB algorithm to solve the cold-start problem for a content recommender system of a financial app. With reasonable assumption, the Hellinger-UCB algorithm has convenient but important lower latency feature. The online experiment also illustrates that the Hellinger-UCB outperforms both KL-UCB and UCB1 in the sense of a higher click-through rate (CTR).
\end{abstract}

\keywords{Multi-armed bandit problem, Upper Confidence Bound, Squared Hellinger Distance, Cold-start problem}

\section{Introduction}
\subsection{Stochastic multi-armed bandit problem}
The stochastic multi-armed bandit problem(MAB) (\cite{Robbins1952SomeAO}) is a sequential decision problem defined by a payoff function and a set of actions. At each step $t\in{1,2,..., T}$, an action $A_{t}$ is chosen from the action set $A=\{1,2,3,...K\}$ by the agent. And the associated reward $r_{t}(A_{t})$, which is independent and identically distributed(i.i.d.), is obtained. The goal of the agent is to find the optimal strategy that maximizes the cumulative payoff obtained in a sequence of decisions
\begin{equation}
   S_{A_{t}}(T)=\sum_{s=1}^{T}r_{t}(A_{t}).
\end{equation}
The agent has to choose the action it has experienced to gain a reward, but the collected information may not be sufficient to make optimal decisions; on the other hand, it needs to try new actions to collect more information and make a better decision in the future. But the new action can be sub-optimal and the agent has to pay the cost of collecting information. The agent must come up with a strategy to maximize the cumulative payoff by dealing with the dilemma between exploitation and exploration. The pseudo-regret $\bar{R}_{T}$ is introduced to evaluate the performance of a strategy. It is defined as the maximized expectation of the difference between the cumulative payoff of consistently choosing the best action and that of the strategy in the first $T$ steps
\begin{equation}
   \bar{R}_{T}=\max_{i=1,2,...,K}\mathbb{E}[\sum_{s=1}^{T}r_{t}(i)-\sum_{s=1}^{T}r_{t}(A_{t})]
\end{equation}
Lai and Robbins(1985)\cite{lai1985asymptotically} showed that if for all $\epsilon>0$ and $ \theta,\theta_{i}\in\Theta$ with $\mu(\theta)>\mu(\theta_{i})$, there exists $\delta>0$ such that $|KL(\theta, \theta_{i})-KL(\theta, \theta_{j})|<\epsilon$ whenever $\mu(\theta_{i})<\mu(\theta_{j})<\mu(\theta_{i})+\delta$, and the following theorem is true. Let $N_{i}(t)$ denote the number of times the agent selected action $i$ in the first $t$ steps,
\begin{theorem}
If a policy has regret $\bar{R}_{T}=o(T^{a})$ for all $a>0$ as $T\to0$, the number of draws up to time $t$, $N_{i}(t)$ of any sub-optimal arm $i$ is lower bounded
\begin{equation}
    \lim\inf_{T\to\infty}\frac{N_{i}(T)}{\log(T)}\geqslant\frac{1}{\inf_{\theta\in\Theta_{i}:\mathbb{E}_{\theta}>\mu^{*}} KL(\theta_{i}, \theta)}
\end{equation}
Therefore, the regret is lower-bounded
\begin{equation}
    \lim\inf_{T\to\infty}\frac{\bar{R}_{T}}{\log(T)}\geqslant\sum_{i:\Delta_{i}\geqslant0}\frac{\Delta_{i}}{\inf_{\theta\in\Theta_{i}:\mathbb{E}_{\theta}>\mu^{*}} KL(\theta_{i}, \theta)}
\end{equation}.
\end{theorem}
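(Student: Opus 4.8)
\emph{Proof strategy (sketch).} The plan is to follow the classical change-of-measure argument of Lai and Robbins; throughout, the bound should be read as a statement on the expected number of draws $\mathbb{E}_\nu[N_i(T)]$. Fix a sub-optimal arm $i$ and write $\nu=(\theta_1,\dots,\theta_K)$ for the instance under consideration, with optimal mean $\mu^*$. First I would construct an auxiliary instance $\nu'$ that agrees with $\nu$ on every arm except arm $i$, whose parameter is replaced by some $\theta'\in\Theta_i$ with $\mu(\theta')>\mu^*$; by the continuity hypothesis on $KL(\cdot,\cdot)$ stated just before the theorem, $\theta'$ can be chosen so that $KL(\theta_i,\theta')$ is within $\epsilon$ of $\inf_{\theta\in\Theta_i:\mu(\theta)>\mu^*}KL(\theta_i,\theta)$. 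Under $\nu'$ arm $i$ becomes the unique optimal arm, so the consistency hypothesis $\bar R_T=o(T^a)$ for all $a>0$ — which must be read as a property of the policy over the whole parametric class, hence applicable to $\nu'$ as well — forces the policy to pull the arms $j\neq i$ only $o(T^a)$ times in expectation under $\nu'$, while under $\nu$ it pulls arm $i$ itself only $o(T^a)$ times.

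The second step is the information-theoretic comparison. Let $\mathbb{P}_\nu,\mathbb{P}_{\nu'}$ denote the laws of the length-$T$ interaction history (actions and rewards). The divergence-decomposition identity, obtained by telescoping the conditional $KL$ terms along the history and using that the policy's action kernel and all reward kernels for arms $j\neq i$ are shared by the two instances, gives $KL(\mathbb{P}_\nu,\mathbb{P}_{\nu'})=\mathbb{E}_\nu[N_i(T)]\,KL(\theta_i,\theta')$. I would then apply the Bretagnolle–Huber inequality to the event $A_T=\{N_i(T)>T/2\}$:
\[
\mathbb{P}_\nu(A_T)+\mathbb{P}_{\nu'}(A_T^c)\;\geq\;\tfrac12\exp\!\big(-KL(\mathbb{P}_\nu,\mathbb{P}_{\nu'})\big).
\]
Markov's inequality bounds the left-hand side: $\mathbb{P}_\nu(A_T)\leq\tfrac{2}{T}\mathbb{E}_\nu[N_i(T)]=o(T^{a-1})$ (since $\bar R_T\geq\Delta_i\,\mathbb{E}_\nu[N_i(T)]$), and $\mathbb{P}_{\nu'}(A_T^c)=\mathbb{P}_{\nu'}\big(\sum_{j\neq i}N_j(T)\geq T/2\big)\leq\tfrac{2}{T}\sum_{j\neq i}\mathbb{E}_{\nu'}[N_j(T)]=o(T^{a-1})$. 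Taking logarithms yields $\mathbb{E}_\nu[N_i(T)]\,KL(\theta_i,\theta')\geq(1-a)\log T-O(1)$.

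Dividing by $\log T$, letting $T\to\infty$, then letting $a\downarrow 0$ (legitimate because consistency holds for every $a>0$), and finally letting $\epsilon\downarrow 0$ in the choice of $\theta'$ gives
\[
\liminf_{T\to\infty}\frac{\mathbb{E}_\nu[N_i(T)]}{\log T}\;\geq\;\frac{1}{\inf_{\theta\in\Theta_i:\mu(\theta)>\mu^*}KL(\theta_i,\theta)},
\]
which is the first claim. For the regret bound I would use the decomposition $\bar R_T=\sum_{i:\Delta_i>0}\Delta_i\,\mathbb{E}_\nu[N_i(T)]$, divide by $\log T$, and apply superadditivity of $\liminf$ together with the per-arm bound just established.

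The main obstacle I expect is the change-of-measure / Bretagnolle–Huber step: one has to invoke the consistency assumption simultaneously for $\nu$ and $\nu'$, and apply the divergence decomposition to the full interaction filtration rather than to the reward sequence alone. Verifying the divergence-decomposition identity is the one genuinely computational piece, but it is routine once the filtration is set up correctly. Passing to the infimum over $\theta'$ is where the continuity hypothesis displayed before the theorem is essential, since without it the map $\theta'\mapsto KL(\theta_i,\theta')$ need not be approachable along parameters whose means decrease to $\mu^*$.
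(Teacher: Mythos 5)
The paper does not actually prove this statement: it is the classical Lai--Robbins asymptotic lower bound, stated in the introduction and attributed to \cite{lai1985asymptotically}, with no proof supplied anywhere in the text (the appendix proves only the upper bound for Hellinger-UCB, Theorem 3.1). So there is no in-paper argument to compare yours against; I can only assess your sketch on its own terms.

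Your sketch is essentially sound and is the now-standard modern route to this result: change of measure to an alternative instance $\nu'$ in which arm $i$ becomes optimal, the divergence decomposition $KL(\mathbb{P}_\nu,\mathbb{P}_{\nu'})=\mathbb{E}_\nu[N_i(T)]\,KL(\theta_i,\theta')$ over the interaction filtration, the Bretagnolle--Huber inequality on the event $\{N_i(T)>T/2\}$, Markov's inequality combined with consistency under both instances, and finally the limits $a\downarrow 0$ and $\theta'\to$ the infimizing parameter (the last step being exactly where the continuity hypothesis displayed before the theorem is needed). This is a genuinely different and considerably more streamlined argument than Lai and Robbins' original 1985 proof, which works with likelihood ratios along stopping times and a more delicate truncation; the Bretagnolle--Huber route buys brevity and transparency at the cost of yielding the bound only for $\mathbb{E}_\nu[N_i(T)]$ rather than the almost-sure/high-probability refinements in the original. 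Two small points worth making explicit if you write this up in full: (i) the theorem as printed bounds $N_i(T)$ rather than $\mathbb{E}[N_i(T)]$, and your reading of it as an expectation statement is the correct interpretation of the cited result (the paper's statement also contains typos, e.g.\ $T\to 0$ for $T\to\infty$ and $\Delta_i\geqslant 0$ for $\Delta_i>0$); (ii) the consistency hypothesis must indeed be invoked as a uniform property of the policy over the whole parametric class, since it is applied to $\nu'$, which you correctly flag.
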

Burnetas and Katehakis(1996)\cite{burnetas1996optimal} extended this result to nonparametric cases.

The stochastic multi-armed bandit problem has been extensively studied \cite{besbes2014stochastic,kalyanakrishnan2012pac,liau2018stochastic}. Under the parametric setting, in which the reward distribution belongs to some family $P_{\theta}$, a type of policy called upper-confidence bound (UCB) is proposed and proved to be promising\cite{auer2002finite}. The policies choose actions based on not only the empirical mean of the rewards but also a UCB. The UCB plays a key role in dealing with the trade-off between exploitation and exploration. Lai and Robbins\cite{lai1985asymptotically} constructed an asymptotically efficient adaptive allocation rule and provided a lower bound for the regret. Agrawal\cite{agrawal1995sample} introduced a family of index policies that is easier to compute. Auer, Nicolo and Paul\cite{auer2002finite} proposed an online, horizon-free procedure which is called upper-confidence bound(UCB) and proved its efficiency. Audibert and Bubeck\cite{MOSS} presented an improvement to the UCB1 called MOSS which is optimal for finite time. A variant of UCB which builds UCB based on the Kullbakc-Leibler divergence(KL) KL-UCB was presented by Garivier, Capp{\'e}\cite{garivier2011kl}. It outperforms the UCB and other variants in practice. Capp{\'e}, Garivier, Maillard, Munos and Stoltz\cite{cappe2013kullback} then proposed kl-UCB$^{+}$ for nonparametric stochastic bandit. Honda and Takemura\cite{honda2015non} presented IMED for the nonparametric stochastic bandit. Later, M{\'e}nard and Garivier\cite{menard2017minimax} proposed the kl-UCB$^{++}$ which is a modified kl-UCB$^{+}$. Lattimore\cite{lattimore2018refining} presented ADA-UCB that is optimal for finite time. The latest UCB algorithm for finite time, KL-UCB-Switch is proposed by Garivier, Hadiji, M{\'e}nard and Stoltz\cite{KLswitch}. In the following sections, we will focus on the parametric and time-horizon free stochastic bandits. 

\subsection{Cold start problem in recommender system}
In the recommender system, the cold start is a situation where the user or content has no previous engagement events, e.g. impressions, clicks, likes, or other interactions. As a closed recommender system, the new users or contents usually start from a cold start cycle: users or contents receive a certain amount of impressions, and then the system will collect the interactions between users and contents for future recommendations. The cold start problem is critical under two major trajectories: i) good cold start policy may increase the major metrics of an application (daily active users (DAU), retention, time spent, etc.) by catching the interest of new users to interact with the application; ii) user-generated content (UGC) is more and more popular so that a mature UGC based recommendation system will receive million to billion new contents which makes recommender system even harder to allocate limited impressions to collect potential feedback for these new contents.

To address the cold start problem, collaborative filtering is a common approach \cite{agarwal2009regression,gunawardana2008tied,park2009pairwise}. However, collaborative filtering often requires a noticeable amount of computation resources such as low-rank matrix factorization \cite{agarwal2009regression}, convex optimization \cite{gunawardana2008tied} or other advanced model like Boltzmann machine \cite{park2009pairwise}. On the other hand, formulating a cold start problem as an MAB problem may utilize the traffic efficiently by considering the exploitation-exploration trade-off \cite{felicio2017multi}. As a promising branch to solve the MAB problem, UCB-type algorithms calculate the confidence bound of all arms and select the arm that has the largest upper confidence bound\cite{auer2002finite, garivier2011kl,cappe2013kullback,li2010contextual,qin2014contextual}. As another bunch of variants, Thompson Sampling algorithms \cite{thompson1933likelihood,agrawal2012analysis,russo2014learning} calculate the probability distribution of each arm with the mean reward value, and then pick the best arm by drawing and comparing values from distributions.

\section{Setup And Notations}
We consider a stochastic multi-armed bandit problem with finite arms $A=\{1,2,3,...K\}$. Each arm $i$ is associated with a reward distribution $p_{i}(\theta)$ over $\mathbb{R}$. It is assumed that $p_{i}(\theta)$ is from some one-parameter exponential family $\mathbb{P(\theta)}$ with unknown expectation $\mu_{i}$.

It is also common to use a different formula for the pseudo-regret for a stochastic problem. Write $\mu^{*}=\max_{i=1,2,...,K}\mathbb{E}[\mu_{i}]$ as the expected reward of the optimal action. Then $\Delta_{i}=\mu_{*}-\mu_{i}$, and we have
\begin{equation}
    \bar{R}_{T}=\sum^{K}_{i=1}{\mathbb{E}[N_{i}(T)]}\mu^{*}-\mathbb{E}[\sum_{i=1}^{K}N_{i}(T)\mu_{i}]=\sum_{i=1}^{K}\Delta_{i}\mathbb{E}[N_{i}(T)]
\end{equation}

The exponential family $P={p(x\arrowvert \theta)}$ is the collection of probability measures whose probability density function has the following general form:
\begin{equation}
    p(x\arrowvert \theta) = \exp(C(x)+\theta^{T}T(x)-\psi(\theta))
\end{equation}
where $\psi(\theta)=\log \int{\exp(C(x)+\theta^{T}T(x))d\mu(x)}$ is called cumulant function, the logarithm of a normalization factor, and $T(x)$ is a sufficient statistics.

In the following sections, the discussion mainly focus on several statistical distances for probability distributions. The most straightforward distance between two distribution is Total Variance distance(TVD):
\begin{equation}
    TVD(P_{0},P_{1})=\frac{1}{2}\|P_{1}-P_{0}\|_{1}=\frac{1}{2}\int_{X}|p_{1}(x)-p_{0}(x)|d\mu
\end{equation}
Another popular statistical distance is Kullback-Leibler divergence (KLD):
\begin{equation}
    D^{(-1)}(P_{0}|P_{1})=KL(P_{0}|P_{1})=\int p_{0}(x)\log \frac{p_{0}(x)}{p_{1}(x)}dx
\end{equation}
Squared Hellinger distance is less popular but has many nice properties. For two distributions from an exponential family, the squared Hellinger distance can be written as
\begin{align}
    H^{2}(P_{\theta_{0}},P_{\theta})=&1-\int{\sqrt{p_{\theta_{0}}p_{\theta}}}dx\\=&1-\int{e^{\frac{1}{2}((\theta_{0}+\theta)T(x)-\psi(\theta_{0})-\psi(\theta)+2C(x))}}dx\\=&1-e^{\psi(\frac{\theta_{0}+\theta}{2})-\frac{\psi(\theta_{0})+\psi(\theta)}{2}}\int{e^{\frac{\theta_{0}+\theta}{2}T(x)-\psi(\frac{\theta_{0}+\theta}{2})+C(x)}}dx\\=&1-e^{\psi(\frac{\theta_{0}+\theta}{2})-\frac{\psi(\theta_{0})+\psi(\theta)}{2}}
\end{align}
Remember that $\dot{\psi}(\theta_{0})=\mathbb{E}[T(x)]$

\section{The Hellinger-UCB Algorithm}
The goal of the UCB algorithm is to make sequential decisions in the stochastic environment. The reward distribution of each arm is unknown. The only way to collect information and estimate the distribution is to pull the arm. But each trial comes with risk which is measured by regret. Hence exploration-exploitation trade-off is important in this case. The motivation of the UCB algorithm is being optimistic about the reward distributions as one always believes that the expected reward is the highest value within the confidence region. Hence the key in the UCB algorithm is constructing the confidence region.

The KLD has some known flaws. It is not a metric since it is usually not symmetric, i.e. $KL(p||q)\neq KL(p||q)$ unless $p=q$. TVD is only upper bounded by the KLD\cite{lehmann2006testing}, 
\begin{equation}
    TVD(p,q)\leqslant\sqrt{\frac{1}{2}KL(P_{0}|P_{1})}
\end{equation}.
While the squared Hellinger distance doesn't have such flaws. It is a metric and controls the Total Variance distance from both sides
\begin{equation}
    H^{2}(p,q)\leqslant\frac{1}{2}TVD(p,q)\leqslant[1-\rho^{2}(p,q)]^{\frac{1}{2}}
\end{equation}.
The formula of the squared Hellinger distance makes it computationally efficient. With this property, we propose a new UCB type algorithm, the Hellinger-UCB which constructs the UCB based on the squared Hellinger distance. The new algorithm achieves the theoretical lower bound and has a closed-form UCB for some distributions, for example, binomial distribution. The latter property is favorable in some low-latency applications. 
\subsection{Main algorithm}
We briefly describe the process of Hellinger-UCB here. Let $A=\{i\}_{i=1}^{K}$ be the action set where $K$, the number of actions, is a positive integer. For each arm $i\in A$, the reward distribution $P_{\theta_{i}}$ is in some one-parameter exponential family with expectation $\mu_{i}$. At the first $|K|$ rounds, the agent chooses each arm once. After that, at each round $t>|K|$, the agent makes a decision $A_{t}=i$ based on the collected observations of each arm and gets the reward $g_{t}(A_{t})$ from $P_{A_{t}}$. The upper confidence bound for arm $i$ is 
\begin{equation}
    U_{i}(t)=\text{sup}\{\dot{\psi}(\theta):H^{2}(P_{\hat{\theta}_{i,t-1}},P_{\theta})\leqslant 1-e^{-c\frac{\log(t)}{N_{i}(t)}}\}
\end{equation}
where $P_{\hat{\theta}_{i,t-1}}$ is the estimated reward distribution based on the past observations and $N_{i}(t)$ the number of pulls of arm $i$. In the right hand side term, $c\in(\frac{1}{4},\frac{1}{2}]$ and usually achieves optimal performance with $c$ slightly greater than $\frac{1}{4}$ in practice. This is a convex optimization problem and can be solved efficiently. The agent will choose the action $i$ with the maximal $U_{i}(t)$. Algorithm 1 shows the pseudo-code of the Hellinger-UCB algorithm.

\begin{algorithm}
  \caption{Hellinger-UCB}
  \begin{enumerate}
    \item
    Known Parameters: $T$(time horizon), $K$(action set), $r_{t}(i)$(reward given action)

    \item
    For $t = 1$ to $|K|$:
    \begin{enumerate}
      \item
      $A_{t}=i=t\%|K|$
      \item
      $N_{i}(t)=1$
      \item
      $S_{i}(t)=r_{t}(A_{t})$
    \end{enumerate}
    end for
    \item
    For $t = |K|+1$ to $T$:
    \begin{enumerate}
      \item
      $A_{t}=\arg{max_{i}\text{sup}\{\dot{\psi}(\theta): H^{2}(P_{\hat{\theta}_{i,t-1}},P_{\theta})\leqslant 1-e^{-c\frac{\log(t)}{N_{i}(t)}}\}}$, where $P_{\hat{\theta}_{t-1}}$ is an estimation of the reward distribution based on the past observations.
      \item
      $r=r_{t}(A_{t})$
      \item
      $N_{i}(t)+=\mathbb{I}\{A_{t}=i\}$
      \item
      $S_{i}(t)+=r$
    \end{enumerate}
    end for
  \end{enumerate}
\end{algorithm}

\subsection{Optimality of Hellinger-UCB}
As a UCB-based algorithm for the stochastic multi-armed bandit problem, we are interested in whether the pseudo regret of the Hellinger-UCB algorithm is optimal. The following theorem guarantees the optimality of this algorithm. We first derive the upper bound of each sub-optimal arm's expected number of pulls.
\begin{theorem}
    Consider a multi-armed bandit problem with $K$ arms and the associated payoffs are some distributions in a one-parameter exponential family. Let $a^{*}$ denote the optimal arm with expectation $\mu^{*}$ and $i$ denote some sub-optimal arm with expectation $\mu_{i}$ such that $\mu_{i}<\mu^{*}$. For any $T>0$, the number of picks of arm $i$ by Hellinger-UCB is $N_{i}(T)$. For any $\epsilon>0$
    \begin{equation*}
        \mathbb{E}[N_{i}(T)]\leqslant-\frac{c\log(T)}{\log(1-\frac{H^{2}(\mu^{*},\mu_{i})}{1+\epsilon})}+\frac{C_{1}(\epsilon)}{T^{C_{2}(\epsilon)}}+\sum_{t=1}^{T}\frac{1}{t^{2c}}+\frac{e^{-2H^{2}(\mu^{*}, \mu_{i})}}{1-e^{-2H^{2}(\mu^{*}, \mu_{i})}}
    \end{equation*}
    where $C_{1}(\epsilon)=-\frac{c}{\log(1-\frac{H^{2}(\mu^{*},\mu_{i})}{1+\epsilon})}>0$ and  $C_{2}(\epsilon)=\frac{(\sqrt{1+\epsilon}-1)^{2}}{1+\epsilon}>0$.\\
    if $c>\frac{1}{4}$,
    \begin{equation*}
        \mathbb{E}[N_{i}(T)]\leqslant-\frac{c\log(T)}{\log(1-\frac{H^{2}(\mu^{*},\mu_{i})}{1+\epsilon})}+\frac{C_{1}(\epsilon)}{T^{C_{2}(\epsilon)}}+O(1)
    \end{equation*}
\end{theorem}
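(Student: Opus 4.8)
I would run the classical optimism-based UCB argument, replacing the KL geometry by the Hellinger geometry. Begin with $N_i(T)=1+\sum_{t=|K|+1}^{T}\mathbb{I}\{A_t=i\}$ and fix the deterministic threshold
\[
  u \;=\; -\,\frac{c\log T}{\log\!\big(1-\tfrac{H^{2}(\mu^{*},\mu_{i})}{1+\epsilon}\big)},
\]
which is precisely the largest value of $N_i(t)$ for which the ball $\{\theta: H^{2}(P_{\theta_i},P_\theta)\le 1-e^{-c\log t/N_i(t)}\}$ can still be inflated by the factor $1+\epsilon$ and reach $\mu^{*}$. Then $\mathbb{E}[N_i(T)]\le u+\sum_{t}\mathbb{P}(A_t=i,\ N_i(t)>u)$. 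Since $A_t=i$ forces $U_i(t)\ge U_{a^{*}}(t)$, and since in a one-parameter exponential family both $\theta\mapsto\dot\psi(\theta)$ and $\theta\mapsto H^{2}(P_{\hat\theta},P_\theta)$ are monotone away from $\hat\theta$, the event under the sum splits into (i) the ``optimistic estimate is wrong'' event $\{U_{a^{*}}(t)<\mu^{*}\}$, (ii) the ``bad arm still looks good after many pulls'' event $\{U_i(t)\ge\mu^{*},\ N_i(t)>u\}$, and (iii) the degenerate event $\{\hat\mu_{i,N_i(t)}\ge\mu^{*}\}$ on which the ball contains $\mu^{*}$ for trivial reasons.

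The common tool is a Hellinger concentration inequality for the empirical canonical parameter: using the closed form $1-H^{2}(P_{\theta_0},P_\theta)=\int\sqrt{p_{\theta_0}p_\theta}\,d\mu$ from Section 2 and recognising this Bhattacharyya integral as the Chernoff integral at parameter $\lambda=\tfrac12$ (so that the product over i.i.d.\ draws is a nonnegative supermartingale), one obtains a one-sided bound of the type $\mathbb{P}\big(H^{2}(P_{\hat\theta_{i,n}},P_{\theta_i})>x\big)\le e^{-2nx}$. For (iii) this gives $\mathbb{P}(\hat\mu_{i,n}\ge\mu^{*})\le e^{-2nH^{2}(\mu^{*},\mu_i)}$, and summing the geometric series over $n\ge1$ yields $\dfrac{e^{-2H^{2}(\mu^{*},\mu_i)}}{1-e^{-2H^{2}(\mu^{*},\mu_i)}}$. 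For (ii) I would use that the Hellinger \emph{distance} (not its square) is a genuine metric: on $\{N_i(t)>u\}$ the defining relation for $u$ forces $H^{2}(P_{\hat\theta_{i,n}},P_{\theta^{*}})\le H^{2}(\mu^{*},\mu_i)/(1+\epsilon)$, and subtracting the triangle inequality $H(\mu_i,\mu^{*})\le H(\mu_i,\hat\mu_{i,n})+H(\hat\mu_{i,n},\mu^{*})$ gives $H(\mu_i,\hat\mu_{i,n})\ge H(\mu^{*},\mu_i)\big(1-1/\sqrt{1+\epsilon}\big)$, i.e.\ $H^{2}(P_{\hat\theta_{i,n}},P_{\theta_i})>C_2(\epsilon)H^{2}(\mu^{*},\mu_i)$; the concentration bound then makes this a geometric series whose tail starting near $n\simeq u\propto\log T$ produces the polynomially small term $\dfrac{C_1(\epsilon)}{T^{C_2(\epsilon)}}$ after substituting the value of $u$. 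For (i), writing the confidence condition as $\big(1-H^{2}(P_{\hat\theta_{a^{*},n}},P_{\theta^{*}})\big)^{n}\ge t^{-c}$ and applying the same concentration together with a maximal/peeling argument over $n\in\{1,\dots,t\}$ (since $N_{a^{*}}(t)$ is random) gives $\mathbb{P}(U_{a^{*}}(t)<\mu^{*})\lesssim t^{-2c}$; summing over $t\le T$ gives the term $\sum_{t\le T}t^{-2c}$, and for $c$ above the stated threshold this and the constant from (iii) are $O(1)$, yielding the second display.

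The step I expect to be the real obstacle is the concentration inequality combined with the randomness of the sample sizes. Unlike the empirical mean, $H^{2}(P_{\hat\theta_{i,n}},P_{\theta_i})$ is a nonlinear functional of the data, so one must pass through the exponential-family structure --- identifying $1-H^{2}$ with $e^{\psi((\theta_0+\theta)/2)-(\psi(\theta_0)+\psi(\theta))/2}$ --- to get a clean sub-exponential tail, and then control it uniformly over the random number of pulls via a Ville-type maximal inequality or an explicit peeling over $n$. This is exactly where the precise form $1-e^{-c\log t/N_i(t)}$ of the radius and the admissible range $c\in(\tfrac14,\tfrac12]$ are forced: the radius must be small enough that the deviation probabilities sum in $t$, yet large enough that the true mean stays inside the ball with high probability, which is what pins the leading coefficient $-c/\log\!\big(1-H^{2}(\mu^{*},\mu_i)/(1+\epsilon)\big)$ and makes it approach the Lai--Robbins rate as $\epsilon\to0$.
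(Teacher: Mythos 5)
Your decomposition is the same as the paper's: you split the event $\{A_t=i\}$ according to (i) the optimistic index of the best arm falling below $\mu^{*}$, (ii) the sub-optimal arm's index still exceeding $\mu^{*}$ after more than $u\asymp C_1(\epsilon)\log T$ pulls, and (iii) the empirical mean of arm $i$ exceeding $\mu^{*}$; these are exactly the paper's Lemmas B.1/B.2, B.5 and B.4 respectively. Your Bhattacharyya--Chernoff derivation of $\mathbb{P}\bigl(H^{2}(P_{\hat\theta_{i,n}},P_{\theta_i})>x\bigr)\le e^{-2nx}$ is a legitimate stand-in for the paper's route (the cited KL self-normalized deviation bound combined with $-\log(1-H^{2})\le\tfrac12 KL$), the peeling over the random pull count is handled the same way, and the triangle-inequality step producing $C_2(\epsilon)=(\sqrt{1+\epsilon}-1)^{2}/(1+\epsilon)$ is identical. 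So the first display of the theorem is established by your argument.

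There is, however, a genuine gap in your treatment of the second display. For event (i) you obtain $\mathbb{P}(U_{a^{*}}(t)<\mu^{*})\lesssim t^{-2c}$ and then assert that $\sum_{t\le T}t^{-2c}=O(1)$ ``for $c$ above the stated threshold.'' The stated threshold is $c>\tfrac14$, but $\sum_t t^{-2c}$ converges only for $c>\tfrac12$; with the global inequality $H^{2}\le\tfrac12 KL$ the exponent $2c$ cannot be improved, so your argument only yields the $O(1)$ conclusion for $c>\tfrac12$. The paper's Lemma B.2 closes this factor of two by invoking the \emph{local} quadratic equivalence $H^{2}(\mu^{*},\hat\mu^{*}(t))=\tfrac14 KL(\hat\mu^{*}(t),\mu^{*})+O\bigl(N^{*}(t)^{-3/2}\bigr)$ valid once $\hat\mu^{*}(t)$ concentrates near $\mu^{*}$: substituting $\tfrac14 KL$ for $H^{2}$ (up to the controllable remainder and a Taylor expansion of $1-e^{-c\log t/N^{*}(t)}$) upgrades the deviation exponent from $2c$ to $4(c-\delta)$, and the resulting series converges precisely when $c>\tfrac14+\delta$. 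Without this local refinement --- which is the only place the sharper constant $\tfrac14$ rather than $\tfrac12$ enters --- the claim for $c\in(\tfrac14,\tfrac12]$ does not follow from your sketch.
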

\begin{proof}
See Appendix for details of the proof
\end{proof}

\indent There is a trade-off between two terms $A=\frac{-c\log(T)}{\log(1-\frac{H^{2}(\mu^{*},\mu_{i})}{1+\epsilon})}$ and $B=\sum_{t=1}^{T}\frac{1}{t^{2c}}$. Small $c$ implies fast shrinking Hellinger ball, so it is unlikely to pull sub-optimal arm $i$ if true $\mu^{*}$ is inside the confidence region. But this also implies a lower exploration rate which increases the probability that $\mu^{*}$ is out of the confidence region. Thus small $A$ small but $B$ may diverge faster than $O(\log(T))$; while a large $c$ guarantees converged $B$ which means $\mu^{*}$ is inside its confidence region with high probability. Meanwhile, the Hellinger ball is large which will increase the probability that the confidence region of optimal arm $a^{*}$ is contained in the confidence region of some sub-optimal arm $i$. Thus $A$ is large.

\indent Notice that $\sum_{t=1}^{\infty}\frac{1}{t^{2c}}$ is a $p$-series, it diverges when $c\leqslant\frac{1}{2}$ and converges when $c>\frac{1}{2}$. As a special case of the $p$-series, $\sum_{t=1}^{T}\frac{1}{t}$ is the harmonic series. Although the harmonic series diverges, its rate of divergence is very slow. The partial sums of the harmonic series have logarithmic growth
\begin{equation}
    \log(T+1)<\sum_{t=1}^{T}\frac{1}{t}\leqslant\log(T)+1
\end{equation}
This suggests that if $c\geqslant\frac{1}{2}$, the Hellinger-UCB algorithm achieves an effective upper bound. If we set $c>\frac{1}{4}$, then the upper bound is ineffective but still with the principle term of $O(\log(T))$. In practical application, we suggest using $c$ slightly higher than $\frac{1}{4}$ to achieve better performance. Then the upper bound of the pseudo-regret is just a direct result of Theorem 3.1
\begin{theorem}
    The regret of Hellinger-UCB satisfies:
\begin{equation}
     \bar{R}_{T}\leqslant\sum_{i:\mu_{i}\leqslant\mu^{*}}\Delta_{i} \mathbb{E}[N_{i}(T)]
\end{equation}
\end{theorem}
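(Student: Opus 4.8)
The plan is to obtain the statement as an immediate corollary of the pseudo-regret identity recorded in Section~2 together with Theorem~3.1, so the argument is short and the only care needed is in setting up the decomposition correctly.

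First I would recall the decomposition of the pseudo-regret. Since the algorithm pulls exactly one arm per round, $\sum_{i=1}^{K} N_i(T)=T$ deterministically, and because the rewards are i.i.d.\ and the action $A_t$ is chosen using only information available before round $t$, conditioning on $\{A_t=i\}$ the reward still has mean $\mu_i$. Hence, by linearity of expectation,
\begin{equation*}
\bar{R}_T=\mu^*\,\mathbb{E}\!\Big[\sum_{i=1}^{K} N_i(T)\Big]-\mathbb{E}\!\Big[\sum_{i=1}^{K} N_i(T)\mu_i\Big]=\sum_{i=1}^{K}\Delta_i\,\mathbb{E}[N_i(T)],
\end{equation*}
which is the identity already displayed in the Setup section.

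Next I would discard the vanishing terms: the optimal arm $a^*$ satisfies $\Delta_{a^*}=\mu^*-\mu^*=0$, so it contributes nothing to the sum. Therefore
\begin{equation*}
\bar{R}_T=\sum_{i:\mu_i<\mu^*}\Delta_i\,\mathbb{E}[N_i(T)]\leqslant\sum_{i:\mu_i\leqslant\mu^*}\Delta_i\,\mathbb{E}[N_i(T)],
\end{equation*}
the inequality being trivial because the larger sum only adds back terms with $\Delta_i=0$. This is exactly the claimed bound. To make it quantitative one would then substitute the per-arm estimate of Theorem~3.1 for each $\mathbb{E}[N_i(T)]$; with $c\in(\tfrac{1}{4},\tfrac{1}{2}]$ and $\epsilon\to 0$ this yields a regret of order $\sum_{i:\mu_i<\mu^*}\Delta_i\big(-c\log T/\log(1-H^2(\mu^*,\mu_i))\big)$, i.e.\ the $O(\log T)$ rate of the Lai--Robbins lower bound up to the factor controlled by $c$.

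I do not expect a genuine obstacle here: the one nontrivial ingredient is the regret decomposition, which is standard, and the only point to watch is that the statement concerns the pseudo-regret (built from the means $\mu_i$), so no concentration inequality is needed at this stage --- all of the probabilistic work has been front-loaded into Theorem~3.1.
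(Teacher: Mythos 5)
Your proposal is correct and matches the paper's (implicit) argument: the paper offers no separate proof beyond calling the bound ``a direct result of Theorem 3.1,'' relying on exactly the pseudo-regret decomposition $\bar{R}_T=\sum_i \Delta_i\,\mathbb{E}[N_i(T)]$ already displayed in Section~2, which is what you invoke. Your additional observation that the relation is in fact an equality (the $\leqslant$ only relaxes terms with $\Delta_i=0$) is a fair and accurate refinement.
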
.

\section{NUMERICAL SIMULATION EXAMPLES}
\indent In this section, we will derive the Hellinger-UCB algorithm for several specific reward distributions which all belong to the exponential family. The squared Hellinger distance for each case is also provided. Numerical experiments show that Hellinger-UCB outperforms KL-UCB and achieves lower pseudo-regret\footnote{An open source implementation and comparison can be found in \href{https://github.com/Ruy0216/HellingerUCB}{https://github.com/Ruy0216/HellingerUCB}}.

\subsection{Setting 1: Bernoulli rewards}
We start with the Bernoulli rewards which is a simple discrete distribution that takes $\{0,1\}$ as its support. It has a lot of important applications in MAB problems, e.g. recommender system with the goal of click-through rate (CTR). There are $10$ arms that contain $9$ sub-optimal arms and $1$ optimal arm in the experiment. 
The expected rewards are given by $[0.01, 0.01, 0.01, 0.02, 0.02, 0.02, 0.05, 0.05, 0.05, 0.1]$. 
\begin{figure}[H]
  \centering
  \includegraphics[width=0.8\textwidth,keepaspectratio]{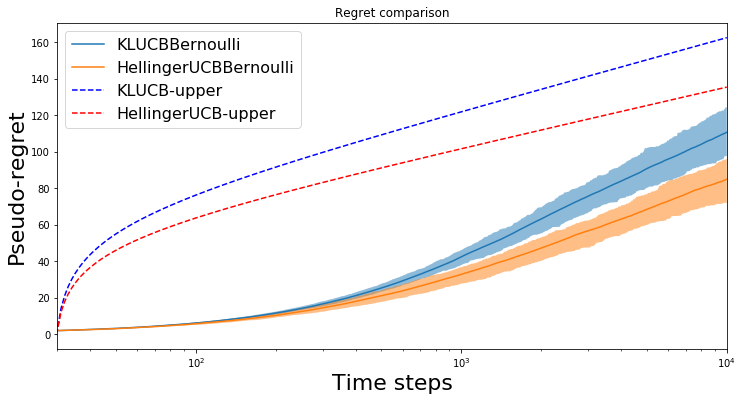}
  \caption{Bernoulli: pseudo regret}
  \label{fig:pseudo regret bernoulli}
\end{figure}
 
\begin{figure}[H]
  \centering
  \includegraphics[width=0.8\textwidth,keepaspectratio]{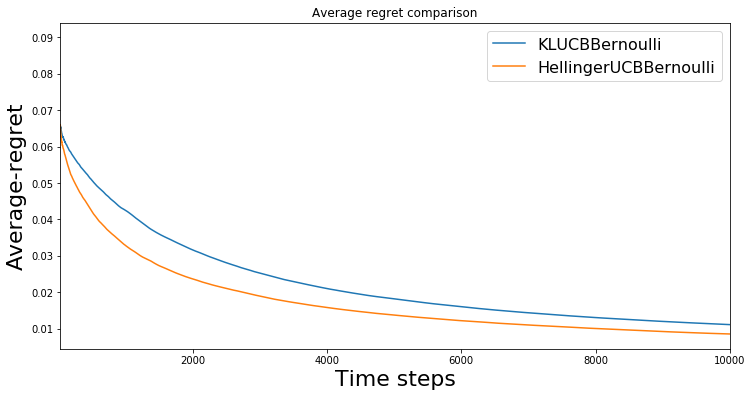}
  \caption{Bernoulli: average pseudo regret}
  \label{fig:average pseudo regret bernoulli}
\end{figure}

\begin{figure}[H]
  \centering
  \includegraphics[width=0.8\textwidth,keepaspectratio]{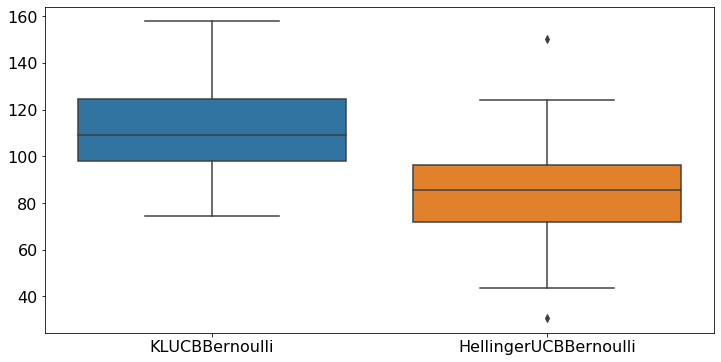}
  \caption{Bernoulli: last step box-plot}
  \label{fig:boxplot bernoulli}
\end{figure}

\subsection{Setting 2: Poisson rewards}
Poisson distribution is often used to model rare events. In this experiment, we set the reward distribution of each arm as a Poisson distribution. We add 6 sub-optimal arms and 1 optimal arm in the experiment and the expected rewards of these arms are given by $[0.03, 0.03, 0.04, 0.04, 0.05, 0.05, 0.1]$.

\begin{figure}[H]
  \centering
  \includegraphics[width=0.8\textwidth,keepaspectratio]{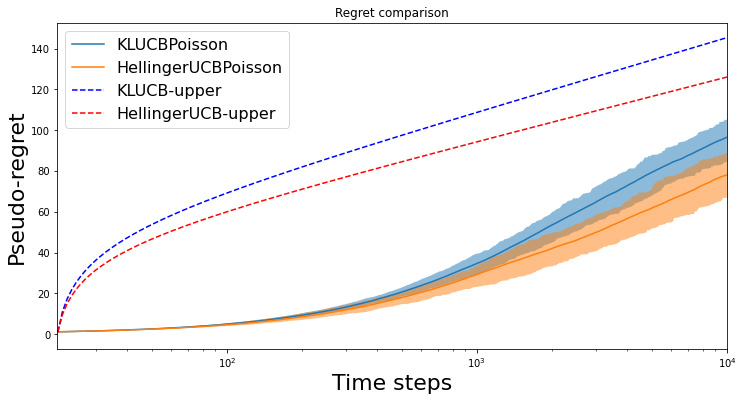}
  \caption{Poisson: pseudo regret}
  \label{fig:pseudo regret poisson}
\end{figure}

\begin{figure}[H]
  \centering
  \includegraphics[width=0.8\textwidth,keepaspectratio]{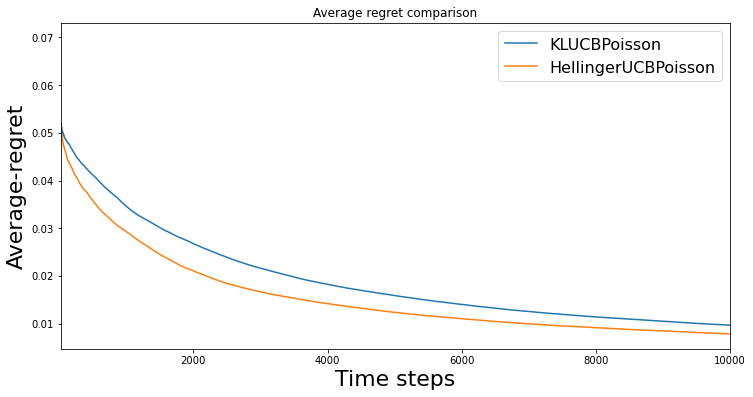}
  \caption{Poisson: average pseudo regret}
  \label{fig:average pseudo regret poisson}
\end{figure}

\begin{figure}[H]
  \centering
  \includegraphics[width=0.8\textwidth,keepaspectratio]{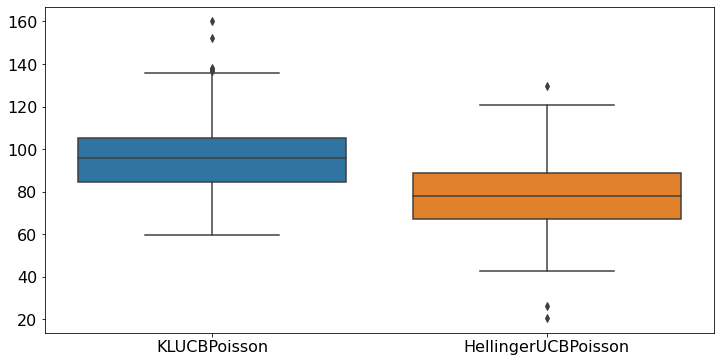}
  \caption{Poisson: last step box-plot}
  \label{fig:boxplot poisson}
\end{figure}

\subsection{Results interpolation}
In Figure \ref{fig:pseudo regret bernoulli} and \ref{fig:pseudo regret poisson}, the solid line is the average pseudo-regret of 200 epochs and the shaded regions are between $25\%$-quantile to $75\%$-quantile. The dashed lines are the pseudo-regret upper bounds. Notice that the x-axis is log-scaled, hence the asymptotically linear property of both pseudo regret curves implies that they are $O(\log(T))$. Clearly, the Hellinger-UCB achieves lower regret than the KL-UCB. Figure \ref{fig:average pseudo regret bernoulli} and \ref{fig:average pseudo regret poisson} show how the average regret changes over time. The gap between the two curves is wider in the early period of the experiment which implies Hellinger-UCB learns faster. Figure \ref{fig:boxplot bernoulli} and \ref{fig:boxplot poisson} are the boxplots of the regret at the last round. It is clear that the Hellinger-UCB outperforms KL-UCB.

\section{Real world example: Article recommendation with cold start}
In this section, we describe a practical example of the Hellinger-UCB algorithm: Article recommendation with cold start. First, we briefly introduce the problem and then we shall show that with reasonable assumption Hellinger-UCB algorithm has a closed-form solution, which is a very nice property in practice. An experiment of reward comparison in the production system is provided at the end of this section. Due to the limitation of traffic and engineering resources, the experiment focuses on the proposed candidate (Hellinger-UCB) and two UCB variants, UCB1 and KL-UCB\footnote{An open source implementation and comparison can be found in \href{https://github.com/guabao/hellinger_ucb_bandits}{https://github.com/guabao/hellinger\_ucb\_bandits}}.

\subsection{Binomial distribution assumption}

Most advanced models in recommender systems use binary feedback signals as main predictions. Some examples are clicks, likes, and comments. Therefore the success and effectiveness of the recommender system are mainly measured by the success rate of these predictions, like click-through rate(CTR) which is the ratio of the number of clicks over the number of impressions. In the real recommender system, the data processing flow records and aggregates the impression data and feedback data on certain resolutions (like every minute or every hour), hence it is convenient to assume that the whole trajectory is a series of Bernoulli trials. With this assumption, the reward of multi-arm bandit setting in the recommender system can be modeled as binomial distribution $B(n, p)$, where $n$ is the number of trails and $p$ is the CTR or other success rate of metrics.

\subsection{The advantage of Hellinger-UCB}\label{assumption}
Hellinger-UCB has two main advantages in the cold start problem. First, Hellinger-UCB has both better numerical stability and performance. In section \ref{UCB-math} we show that under binomial distribution assumption, the Hellinger-UCB algorithm has closed form solution for each arm, and the KL-UCB algorithm has to run an iterative root-finding algorithm to solve an equation with logarithm. The second is production feasibility. Usually, a recommender system is a large system involving the front end, back end, data processing, mode predicting, etc. For a better user experience, this large system should give a recommendation list with relatively low latency (200 milliseconds for news feed or other similar business in general). With fewer feature interactions, the time quota for such a cold start strategy is much less than the whole time limit. For example, the business that runs the Hellinger-UCB algorithm has about 10,000 new contents every day, but the limit of the cold start strategy is only 10 milliseconds. This means the algorithm has to calculate 10,000 arms and pop out top arms in 10 milliseconds. The iterative algorithm has no chance fit such a tiny time limit with reasonable numerical tolerance. But an algorithm with a simple closed-form solution such as Hellinger-UCB can get results on these contents within tiny time limits.

\subsection{Solution of different UCBs} \label{UCB-math}
We will compare the solutions of three UCB algorithms and introduce the advantages and disadvantages of those algorithms. We will also prove that Hellinger-UCB has a close form with binomial distribution assumption of rewards.

\subsubsection{UCB1 algorithm}
The UCB1 algorithm, regardless of the reward distribution, always uses the following UCB formula:
\begin{align}
    U_{t}(i)=\hat{\mu}_{i,t-1}+\sqrt{\frac{2\log(t)}{N_{i}(t)}}
\end{align}
where $\hat{\mu}_{i, t-1}$ is the success ratio of content $i$ at time $t-1$, $N_{i}(t)$ the number of impressions of content $i$ at time $t$. UCB1 algorithm can always compute its confidence bound with a counting process.

\subsubsection{KL-UCB algorithm}
The KL-UCB algorithm solves the following optimization problem numerically to find the best arm
\begin{align}
    U_{t}(i)=\sup\{\mu(\theta):KL(P_{\hat{\theta}_{t-1}}|P_{\theta})\leqslant \frac{\log(t)+c\log\log(t)}{N_{i}(t)}\}
    \label{equation: KL-UCB for cold start}
\end{align}
where $P_{\hat{\theta}_{t-1}}$ and $P_{\hat{\theta}_{t}}$ are the reward distribution at $t-1$ and $t$. With the assumption of reward distribution in \ref{assumption}, the solution of KL-UCB becomes the following root-finding problem
\begin{align}
    U_{t}(i)=\sup\{p_{t}: p_{t-1} \log \frac{p_{t-1}}{p_{t}} + (1- p_{t-1}) \log \frac{1-p_{t-1}}{1-p_{t}} = C\}
    \label{equation: KL-UCB newton equation}
\end{align}
where
\begin{align}
    C = \frac{\log(t)+c\log\log(t)}{N_{i}(t)}
\end{align}
It is easy to get that \ref{equation: KL-UCB newton equation} does not have a closed-form solution and requires a numerical solver to iterate the solution. Therefore KL-USB requires much more computation resources than UCB1 though KL-UCB has better performance than UCB1. In real-world applications, KL-UCB is less favorable than UCB1 since it requires much more careful engineering controls.

\subsubsection{Hellinger-UCB algorithm} The Hellinger-UCB chooses the best arm by solving the following optimization problem
\begin{align}
    U_{t}(i)=\text{sup}\{\dot{\psi}(\theta):H^{2}(P_{\hat{\theta}_{i,t-1}},P_{\theta})\leqslant 1-e^{-c\frac{\log(t)}{N_{i}(t)}}\}
    \label{equation: Hellinger-UCB for cold start}
\end{align}
unlike KL-UCB, the Hellinger-UCB algorithm has a closed-form solution with the binomial reward distribution assumption in \ref{assumption}.

Recall that the squared Hellinger distance between two Binomial distributions $B(n, p)$ and $B(n, q)$ is given by
\begin{equation}
    H^{2}(p,q)=1-\sqrt{(1-p)(1-q)}-\sqrt{pq}
\end{equation}
Let $f(q)=1-\sqrt{(1-p)(1-q)}-\sqrt{pq}$, its derivative is $f^{\prime}(q)=\sqrt{\frac{1-p}{1-q}}-\sqrt{\frac{p}{q}}$. It is easy to see that 
\begin{equation}
    f^{\prime}(q): 
    \begin{cases}
    <0, & q<p,\\
    =0, & q=p,\\
    >0, & q>p.
    \end{cases}
\end{equation}
The solution to the above equation \ref{equation: Hellinger-UCB for cold start} must be on the squared Hellinger ball. Let $R$ be the radius of the squared Hellinger Ball, i.e.
\begin{equation}
    R = 1-\sqrt{(1-p)(1-q)}-\sqrt{pq}.
\end{equation}
Divide both sides by $\sqrt{q}$ and let $m_{1}=\sqrt{\frac{1-p}{p}}$ and $m_{2}=\frac{1-R}{\sqrt{p}}$
\begin{equation}
    \sqrt{q}+m_{1}\sqrt{1-q}=m_{2}.
\end{equation}
Take the square of both sides and simplify the equation
\begin{equation}
    2m_{1}\sqrt{q(1-q)}=m_{2}^{2}-m_{1}^{2}+(m_{1}^{2}-1)q
\end{equation}
Repeat above procedure one more time and simplify the result
\begin{equation}
    (m_{1}^{2}+1)^{2}q^{2}+2(m_{1}^{2}m_{2}^{2}-m_{1}^{4}-m_{1}^{2}-m_{2}^{2})q+(m_{2}^{2}-m_{1}^{2})^{2}=0.
    \label{equation: Hellinger-UCB final equation}
\end{equation}
Let $a=(m_{1}^{2}+1)^{2}$, $b=2(m_{1}^{2}m_{2}^{2}-m_{1}^{4}-m_{1}^{2}-m_{2}^{2})$ and $c=(m_{2}^{2}-m_{1}^{2})^{2}$, the root of \ref{equation: Hellinger-UCB final equation} is 
\begin{equation}
    q=\frac{-b\pm\sqrt{b^{2}-4ac}}{2a}.
\end{equation}
The larger root is desired. Therefore, Hellinger-UCB has a close form solution with binomial reward distribution assumption.

\begin{table}
\begin{center}
\begin{tabular}{ |c||c||c|  }
 \hline
 Algorithm & Close form solution & Reward\\
 \hline
 UCB1   & Yes & low  \\
 \hline
 KL-UCB &  No & moderate  \\
 \hline
 Hellinger-UCB & Yes & high \\
 \hline
\end{tabular}
\end{center}
\caption{Compare the advantage and disadvantage of UCB1, KL-UCB and Hellinger-UCB in real application}
\label{talbe: etf names}
\end{table}

\subsection{Numerical result}
The long-run online experiment is conducted in the front page content recommendation business of JD Finance App. The recommendation system is designed to provide personalized multi-type content recommendations to the users. For each request, the cold start model is required to rank a set of articles and tweets, and then present the top-rank contents to the users. All three algorithms, UCB1, KL-UCB, and Hellinger-UCB, rank about 10 thousand of contents from the cold start pool with estimated CTR. Following the assumption in \ref{assumption}, CTR is modeled as the mean reward of a series of Bernoulli trials, which is the exact historical clicks and impression information. Three UCB algorithms share the whole traffic and the final impression is generated by randomly selecting one of three results uniformly. Therefore three algorithms have relatively the same amount of impressions and do not overlap with each other. The system records 1 point as a reward to the corresponding algorithm when the user has any positive interaction (click/like/comment) with the content. Under this setting, the comparison of rewards among the three algorithms will give an insight into CTR for each algorithm.

Figure \ref{figure: cold start} shows the cumulative reward plot of three algorithms in a two-month experiment from Oct. 2020 to Nov. 2020. It is very clear that Hellinger-UCB (blue line) significantly outperforms KL-UCB (orange line) and UCB1 algorithm (green line). In fact, the Hellinger-UCB algorithm achieves about $33\%$ more clicks than the KL-UCB algorithm and almost $100\%$ more clicks than the UCB1 algorithm. Hellinger-UCB algorithm obtains more clicks in the early period and then achieves even more clicks as the learning continues. This is an encouraging illustration of the potential power of Hellinger-UCB in real applications.

\begin{figure}[H]
  \centering
  \includegraphics[width=0.8\textwidth,keepaspectratio]{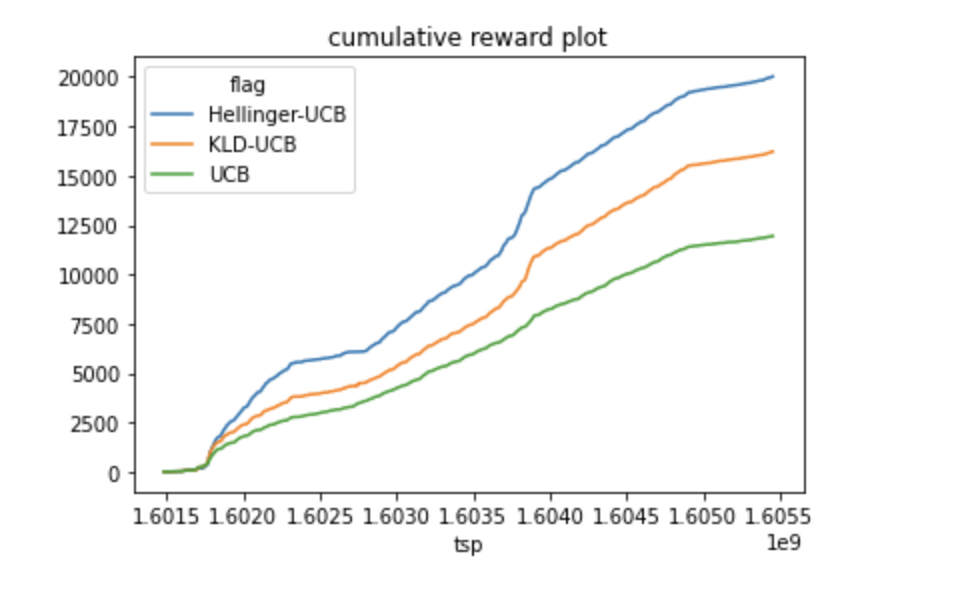}
  \caption{Cumulative reward plot of different UCB algorithms. The y-axis is reward points. The x-axis is a time stamp recorded as a 9-digit integer. The green line is the UCB1 algorithm. The orange line is the KL-UCB algorithm. The blue line is the Hellinger-UCB algorithm. Hellinger-UCB algorithm is the best algorithm of the three. KL-UCB has the second highest reward, and UCB1 has the least reward.}
  \label{figure: cold start}
\end{figure}

\section{Conclusion}
We presented the Hellinger-UCB algorithm for the stochastic multi-armed bandit problem. In the case that the reward is from an unknown exponential family, we provide the detailed formula of the algorithm and an optimal regret upper bound that achieves $O(\log(T))$. We present both simulated and real numerical experiments that show significant improvement over other variants of UCB algorithms. We also show the algorithm has a closed-form UCB when the reward is a bernoulli distribution, which is a beneficial property for low-latency applications.

The squared Hellinger distance is of favorable mathematical properties and statistical interpretation but with less attention. In the future, we plan to apply this idea to other algorithms in different settings, including unparameterized bandits, contextual bandits, etc.

\printbibliography

\appendix
\section{Proof of Theorem 3.1}
\begin{proof}
Hellinger-UCB algorithm relies on the following upper confidence bound for $\mu_{i}$:
\begin{equation}
    u_{i}(t)=max\{q>\hat{\mu_{i}}(t):H^{2}(\hat{\mu_{i}}(t),q)\leqslant 1-\exp\{-c\frac{log(t)}{N_{i}(t)}\}\}
\end{equation}
The expectation of $N_{i}(T)$ is upper-bounded by using the following decomposition. When a sub-optimal arm $i$ is pulled, then the upper confidence bound of the optimal arm $u^{*}(t)$ based on historical observations is either greater or less than its true expectation $\mu^{*}$. In the latter case, 
\begin{align}
    \mathbb{E}[N_{i}(T)]&=\mathbb{E}[\sum_{t=1}^{T}\mathbb{I}\{A_{t}=i\}]\\
    &=\mathbb{E}[\sum_{t=1}^{T}\mathbb{I}\{A_{t}=i, \mu^{*}>u^{*}(t)\}]+\mathbb{E}[\sum_{t=1}^{T}\mathbb{I}\{A_{t}=i, \mu^{*}\leqslant u^{*}(t)\}]\\
    &\leqslant \sum_{t=1}^{T}\mathbb{P}\{\mu^{*}>u^{*}(t)\} + \mathbb{E}[\sum_{t=1}^{T}\mathbb{I}\{A_{t}=i, \mu^{*}\leqslant u^{*}(t)\}]\\
    &\leqslant C_{1}(\epsilon)\log(T)+\frac{(C_{2}(\epsilon)H^{2}(\mu^{*},\mu_{i}))^{-1}}{T^{2C_{1}(\epsilon)C_{2}(\epsilon)H^{2}(\mu^{*},\mu_{i})}}\\
    &+\frac{e^{-2H^{2}(\mu^{*}, \mu_{i})}}{1-e^{-2H^{2}(\mu^{*}, \mu_{i})}}+\sum_{t=1}^{T}\frac{1}{t^{2c}}
\end{align}
The last inequality is from Lemma B.1
\begin{equation}
    \sum_{t=1}^{T}\mathbb{P}\{\mu^{*}>u^{*}(t)\}\leqslant\sum_{t=1}^{T}\frac{1}{t^{2c}}
\end{equation}
and Lemma B.3
\begin{align}
    &\mathbb{E}[\sum_{t=1}^{T}\mathbb{I}\{A_{t}=i, \mu^{*}\leqslant u^{*}(t)\}]\\
    \leqslant&C_{1}(\epsilon)\log(T)+\frac{(C_{2}(\epsilon)H^{2}(\mu^{*},\mu_{i}))^{-1}}{T^{2C_{1}(\epsilon)C_{2}(\epsilon)H^{2}(\mu^{*},\mu_{i})}}+\frac{e^{-2H^{2}(\mu^{*}, \mu_{i})}}{1-e^{-2H^{2}(\mu^{*}, \mu_{i})}}
\end{align}
If $c>\frac{1}{4}$, according to Lemma B.2 and Lemma B.3, we have
\begin{align}
   \mathbb{E}[N_{i}(T)]\leqslant C_{1}(\epsilon)\log(T)+\frac{(C_{2}(\epsilon)H^{2}(\mu^{*},\mu_{i}))^{-1}}{T^{2C_{1}(\epsilon)C_{2}(\epsilon)H^{2}(\mu^{*},\mu_{i})}}+O(1)
\end{align}
The details of these lemmas are in the following section.
\end{proof}

\section{THE PROOF OF THE THEOREM}
This concentration inequality\cite{cappe2013kullback} will be used several times
\begin{equation}
    \mathbb{P}\{\hat{\mu}(n)>\mu,KL(\hat{\mu}(n),\mu)>\frac{f(n)}{n}\}\leqslant e^{-f(n)}
\end{equation}.
The following lemmas support the proof of the main theorem.
\begin{lemma}
\begin{equation*}
    \sum_{t=1}^{T}\mathbb{P}\{\mu^{*}>u^{*}(t)\}\leqslant\sum_{t=1}^{T}\frac{1}{t^{2c}}
\end{equation*}
.
\end{lemma}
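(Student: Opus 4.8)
The plan is to reduce the event $\{\mu^{*}>u^{*}(t)\}$ to an underestimation of the optimal arm's mean by its empirical average, and then to invoke the Kullback--Leibler concentration inequality quoted at the start of this appendix. First I would unpack the upper confidence bound. Since $u^{*}(t)=\max\{q>\hat{\mu}^{*}(t): H^{2}(\hat{\mu}^{*}(t),q)\le 1-e^{-c\log(t)/N^{*}(t)}\}$ and $q\mapsto H^{2}(\hat{\mu}^{*}(t),q)$ is nondecreasing for $q\ge\hat{\mu}^{*}(t)$, the event $\{\mu^{*}>u^{*}(t)\}$ forces, on the one hand, $\hat{\mu}^{*}(t)<\mu^{*}$ (otherwise $u^{*}(t)\ge\hat{\mu}^{*}(t)\ge\mu^{*}$) and, on the other hand, $\mu^{*}$ to lie strictly outside the squared-Hellinger ball:
\[
  H^{2}\bigl(\hat{\mu}^{*}(t),\mu^{*}\bigr) > 1-e^{-c\log(t)/N^{*}(t)} .
\]

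Next I would convert the Hellinger condition into a KL condition. Writing $1-H^{2}(P,Q)=\int\sqrt{pq}\,d\mu=\mathbb{E}_{P}\bigl[\sqrt{q/p}\bigr]$ and applying Jensen's inequality to the convex function $-\log$, one obtains the comparison $-\log\bigl(1-H^{2}(P,Q)\bigr)\le\frac{1}{2}KL(P\|Q)$. Applied to the distributions with means $\hat{\mu}^{*}(t)$ and $\mu^{*}$, the displayed inequality then gives $KL(\hat{\mu}^{*}(t),\mu^{*})>2c\log(t)/N^{*}(t)$, i.e. $N^{*}(t)\,KL(\hat{\mu}^{*}(t),\mu^{*})>2c\log(t)$, together with $\hat{\mu}^{*}(t)<\mu^{*}$.

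The third step applies the concentration inequality in its lower-tail form, $\mathbb{P}\{\hat{\mu}(n)<\mu,\;n\,KL(\hat{\mu}(n),\mu)>f(n)\}\le e^{-f(n)}$ (the mirror image of the stated bound, obtained by the same Chernoff argument), with $f(n)=2c\log(t)$, so that each fixed number of pulls contributes at most $e^{-2c\log(t)}=t^{-2c}$. To deal with the randomness of $N^{*}(t)$ without inflating the exponent I would peel over dyadic ranges $N^{*}(t)\in[2^{j},2^{j+1})$: inside each block the Hellinger radius $c\log(t)/N^{*}(t)$ varies by at most a factor of two, so only a constant is lost. This yields $\mathbb{P}\{\mu^{*}>u^{*}(t)\}\le t^{-2c}$, and summing over $t=1,\dots,T$ gives the claim.

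I expect the third step to be the main obstacle: a crude union bound over all $n\le t$ would cost a factor $t$ and weaken the estimate to $t^{1-2c}$ (summable only for $c>1$), whereas the peeling --- or an equivalent self-normalized deviation inequality --- is what lets one keep the exponent $2c$, hence convergence for $c>\frac{1}{2}$. The Hellinger--KL comparison is routine, but one must watch the direction of the inequality and the constant $\frac{1}{2}$, since it is exactly that constant that produces the exponent $2c$ rather than $c$.
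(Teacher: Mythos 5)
Your proof follows the same route as the paper's: reduce $\{\mu^{*}>u^{*}(t)\}$ to the event $\{\hat{\mu}^{*}(t)<\mu^{*},\ H^{2}(\hat{\mu}^{*}(t),\mu^{*})\geqslant 1-e^{-c\log(t)/N^{*}(t)}\}$, convert the Hellinger condition into $KL(\hat{\mu}^{*}(t),\mu^{*})\geqslant 2c\log(t)/N^{*}(t)$ via the Jensen bound $-\log(1-H^{2})\leqslant\frac{1}{2}KL$, and apply the lower-tail form of the quoted KL concentration inequality with $f(n)=2c\log(t)$ to get $t^{-2c}$. The only place you depart from the paper is your third step: the paper simply substitutes the random $N^{*}(t)$ into the fixed-$n$ deviation bound with no union bound or peeling, so your concern about uniformity over the number of pulls is well taken and in fact points at a gap in the paper's own argument; be aware, though, that standard peeling (or the self-normalized deviation inequality of Capp\'e et al.) costs a multiplicative factor of order $\delta\log t$ rather than a constant, so the clean $t^{-2c}$ per-term bound is slightly optimistic under either treatment.
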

\begin{proof}
$\hat{\mu}^{*}(t)$ is the M.L.E. of $\mu^{*}$, then
\begin{align}
    &\mathbb{P}\{\mu^{*}>u^{*}(t)\}\\
    \leqslant&\mathbb{P}\{\mu^{*}>\hat{\mu}^{*}(t),H^{2}(\mu^{*},\hat{\mu}^{*}(t))\geqslant1-\exp\{-c\frac{\log(t)}{N^{*}(t)}\}\}
\end{align}
Since for exponential family $-\log(1-H^{2}(\mu^{*},\hat{\mu}^{*}(t)))\leqslant\frac{1}{2}KL(\hat{\mu}^{*}(t),\mu^{*})$, (B.3) becomes
\begin{align}
    &\mathbb{P}\{\mu^{*}>u^{*}(t)\}\\
    \leqslant&\mathbb{P}\{\mu^{*}>\hat{\mu}^{*}(t),1-e^{-\frac{1}{2}KL(\hat{\mu}^{*}(t),\mu^{*})}\geqslant(1-\exp\{-c\frac{\log(t)}{N^{*}(t)}\})\}\\
    \leqslant&\mathbb{P}\{\mu^{*}>\hat{\mu}^{*}(t),KL(\hat{\mu}^{*}(t),\mu^{*})\geqslant2c\frac{\log(t)}{N^{*}(t)}\}\\
    \leqslant&e^{-2c\log(t)}\\
    =&\frac{1}{t^{2c}}
\end{align}.
Then 
\begin{align}
    \sum_{t=1}^{T}\mathbb{P}\{\mu^{*}>u^{*}(t)\}\leqslant\sum_{t=1}^{T}\frac{1}{t^{2c}}
\end{align}

\end{proof}

\begin{lemma}
If $c>\frac{1}{4}$ in
\begin{align*}
    u^{*}(t)=max\{q>\hat{\mu_{i}}(t):H^{2}(\hat{\mu^{*}}(t),q)\leqslant 1-\exp\{-c\frac{log(t)}{N^{*}(t)}\}\}
\end{align*}
then 
\begin{equation*}
    \sum_{t=1}^{\infty}\mathbb{P}\{\mu^{*}>u^{*}(t)\}=O(1)
\end{equation*}
.
\end{lemma}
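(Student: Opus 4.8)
The plan is to sharpen the per‑round bound of Lemma B.1. Recall that Lemma B.1 shows $\{\mu^{*}>u^{*}(t)\}\subseteq\{\hat\mu^{*}(t)<\mu^{*},\ N^{*}(t)\,(-\log(1-H^{2}(\hat\mu^{*}(t),\mu^{*})))>c\log t\}$ and then loses a factor of $2$ by invoking $-\log(1-H^{2}(\nu,\mu^{*}))\le\tfrac12 KL(\nu\|\mu^{*})$ (equivalently, monotonicity of the Rényi divergences in the order, since $-\log(1-H^{2})=\tfrac12 D_{1/2}$ and $D_{1/2}\le D_{1}=KL$), which gives only $\mathbb{P}\{\mu^{*}>u^{*}(t)\}\le t^{-2c}$ — not summable when $c\le\tfrac12$. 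The point is that this conversion is asymptotically loose by a further factor of $2$: expanding the cumulant $\psi$ to second order around $\theta^{*}$ (its second derivative there being the Fisher information) gives $-\log(1-H^{2}(\nu,\mu^{*}))=\tfrac14 KL(\nu\|\mu^{*})(1+o(1))$ as $\nu\to\mu^{*}$. So I would first fix $\epsilon>0$ small enough that $4c/(1+\epsilon)>1$ — possible precisely because $c>\tfrac14$ — and a one‑sided neighbourhood $[\mu^{*}-\delta_{0},\mu^{*}]$ on which $-\log(1-H^{2}(\nu,\mu^{*}))\le\tfrac{1+\epsilon}{4}KL(\nu\|\mu^{*})$.

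Next I would split $\{\mu^{*}>u^{*}(t)\}$ according to whether $\hat\mu^{*}(t)$ lies in that neighbourhood. On the event $\{\hat\mu^{*}(t)\in[\mu^{*}-\delta_{0},\mu^{*})\}$ the sharpened conversion turns the Hellinger constraint into $N^{*}(t)\,KL(\hat\mu^{*}(t)\|\mu^{*})>\tfrac{4c}{1+\epsilon}\log t$; writing $\hat\mu^{*}(t)=\hat\mu^{*}_{N^{*}(t)}$ for the empirical mean of the first $N^{*}(t)$ draws from the optimal arm and running the usual peeling device over a geometric grid (ratio $\gamma>1$) of values of $N^{*}(t)$, one applies the maximal form of the exponential‑family deviation inequality in each block to get $\mathbb{P}\{\mu^{*}>u^{*}(t),\ \hat\mu^{*}(t)\ge\mu^{*}-\delta_{0}\}\lesssim(\log t)\,t^{-4c/((1+\epsilon)\gamma)}$; taking $\gamma$ close enough to $1$ keeps the exponent above $1$, so this part is summable in $t$.

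The remaining piece, $\mathbb{P}\{\mu^{*}>u^{*}(t),\ \hat\mu^{*}(t)<\mu^{*}-\delta_{0}\}$, is the crux. On it the empirical mean of $N^{*}(t)$ i.i.d.\ samples has deviated by at least the fixed amount $\delta_{0}$, an event of probability at most $e^{-c_{0}N^{*}(t)}$ with $c_{0}:=KL(\mu^{*}-\delta_{0}\|\mu^{*})>0$, which is harmless once $N^{*}(t)\gtrsim\log t$. I would therefore prove the auxiliary estimate $\mathbb{P}\{N^{*}(t)\le\kappa\log t\}=O(t^{-2})$ for a suitable constant $\kappa$: the optimal arm fails to be pulled at round $s$ only when some sub‑optimal $i$ has $u_{i}(s)\ge u^{*}(s)$, so the number of such rounds is controlled — with failure probability summable in $t$ — by combining the crude bound of Lemma B.1 (at horizon $t$) with the sub‑optimal‑pull bound of Lemma B.3, neither of which depends on the present lemma, together with a concentration step to upgrade these expectation bounds to the required tail bound. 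Putting the pieces together, $\mathbb{P}\{\mu^{*}>u^{*}(t)\}\le(\log t)\,t^{-4c/((1+\epsilon)\gamma)}+O(t^{-2})$, both terms summable, so $\sum_{t}\mathbb{P}\{\mu^{*}>u^{*}(t)\}<\infty$. The two steps I expect to be delicate are the sharpened local Hellinger–KL inequality (it is only a neighbourhood statement, and uniformity over $[\mu^{*}-\delta_{0},\mu^{*}]$ must be checked from the exponential‑family structure) and, above all, the high‑probability lower bound $N^{*}(t)\gtrsim\log t$, where one must be careful to route everything through the already‑established Lemmas B.1 and B.3 rather than through the full conclusion of Theorem 3.1.
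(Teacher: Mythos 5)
Your core mechanism is the same one the paper uses: replace the global comparison $-\log(1-H^{2}(\nu,\mu^{*}))\leqslant\tfrac12 KL(\nu,\mu^{*})$ (which only yields the non-summable $t^{-2c}$) by the local second-order equivalence $-\log(1-H^{2}(\nu,\mu^{*}))\approx\tfrac14 KL(\nu,\mu^{*})$ near $\mu^{*}$, so that the exponent becomes (nearly) $4c>1$. Where you differ is in making explicit two things the paper's proof of Lemma B.2 passes over silently: that the factor-$4$ comparison is only valid on the random event $\{\hat\mu^{*}(t)\in[\mu^{*}-\delta_{0},\mu^{*}]\}$, and that $N^{*}(t)$ is random, so the deviation inequality must be applied through a union bound or peeling rather than at the realized value of $N^{*}(t)$. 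Your peeling step for the near-$\mu^{*}$ piece, giving $(\log t)\,t^{-4c/((1+\epsilon)\gamma)}$ with $\gamma$ close to $1$, is the standard Garivier--Capp\'e device and is sound.

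The genuine gap is the far-deviation piece, and specifically the auxiliary estimate $\mathbb{P}\{N^{*}(t)\leqslant\kappa\log t\}=O(t^{-2})$ that you propose to derive from Lemmas B.1 and B.3. Those lemmas only bound \emph{expectations}, and for $c\leqslant\tfrac12$ the bound on the number of rounds with $u^{*}(s)<\mu^{*}$ up to time $t$ is $\sum_{s\leqslant t}s^{-2c}=O(t^{1-2c})$, so the best you get from Markov's inequality is $\mathbb{P}\{N^{*}(t)\leqslant t/2\}=O(K\,t^{-2c})$ --- not summable for $c$ near $\tfrac14$, and nowhere near $O(t^{-2})$. Upgrading this to a genuine high-probability lower bound on $N^{*}(t)$ is not a routine ``concentration step'': the event $\{N^{*}(t)\text{ small}\}$ is driven by $\{u^{*}(s)<\mu^{*}\text{ for many }s\leqslant t\}$, which is exactly the family of events this lemma is trying to control, so the route you sketch is circular unless you supply an independent argument. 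Without that estimate, the contribution of rounds where $\hat\mu^{*}(t)<\mu^{*}-\delta_{0}$ (equivalently, where $N^{*}(t)$ may be $O(1)$) is only bounded by roughly $(\log\log t)\,t^{-2c/\gamma}$ via the crude factor-$2$ comparison, which is not summable for $c$ near $\tfrac14$. To be fair, the paper's own proof does not close this gap either --- it treats $\hat\mu^{*}(t)\to\mu^{*}$ and $\log(t)/N^{*}(t)\to 0$ as if they were deterministic and applies the concentration inequality at the random $N^{*}(t)$ without a union bound --- so you have correctly located the weak point of the whole argument; but as written your proposal does not repair it.
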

\begin{proof}
$\hat{\mu}^{*}(t)$ is the M.L.E. of $\mu^{*}$ at $t$
\begin{align}
    &\mathbb{P}\{\mu^{*}>u^{*}(t)\}\\
    \leqslant&\mathbb{P}\{\mu^{*}>\hat{\mu}^{*}(t),H^{2}(\mu^{*},\hat{\mu}^{*}(t))\geqslant1-\exp\{-c\frac{\log(t)}{N^{*}(t)}\}\}
\end{align}
Since $H^{2}(\mu^{*},\hat{\mu}^{*}(t))=\frac{1}{4}\text{KL}(\hat{\mu}^{*}(t),\mu^{*})$ if $t\to\infty$. There exist $T_{1}>0$ and $\delta_{1}>0$ such that for $t>T_{1}$
\begin{align}
    &H^{2}(\mu^{*},\hat{\mu}^{*}(t))\leqslant\frac{1}{4}\text{KL}(\hat{\mu}^{*}(t),\mu^{*})+\delta_{1}
\end{align}
It is known that $\delta_{1}=O(N^{*}(t)^{-\frac{3}{2}})$. Thus (B.11) becomes
\begin{align}
    &\mathbb{P}\{\mu^{*}>u^{*}(t)\}\\
    \leqslant&\mathbb{P}\{\mu^{*}>\hat{\mu}^{*}(t),\text{KL}(\hat{\mu}^{*}(t),\mu^{*})\geqslant4(1-\exp\{-c\frac{\log(t)}{N^{*}(t)}\}-\delta_{1})\}
\end{align}
We have the Taylor expansion 
\begin{align}
    \exp\{-c\frac{\log(t)}{N^{*}(t)}\}&=1-c\frac{\log(t)}{N^{*}(t)}+\frac{1}{2!}(-c\frac{\log(t)}{N^{*}(t)})^{2}+\frac{1}{3!}(-c\frac{\log(t)}{N^{*}(t)})^{3}+...\\
    &=1-c\frac{\log(t)}{N^{*}(t)}-\frac{\log(t)}{N^{*}(t)}R(c,t)
\end{align}
where $R(c,t)=\sum_{k=2}^{\infty}\frac{1}{k!}c^{k}(-\frac{\log(t)}{N^{*}(t)})^{k-1}$ is a negative function for $c<1$. Notice $\lim_{t\to\infty}R(c,t)\to0$ since $\lim_{t\to\infty}\frac{\log(t)}{N^{*(t)}}=0$. There exist $T_{2}>0$ and $\delta>0$ such that $-\delta<R(c,t)<0$. Therefore for $t>\max(T_{1},T_{2})$, (B.14) will be
\begin{align}
    &\mathbb{P}\{\mu^{*}>u^{*}(t)\}\\
    \leqslant&\mathbb{P}\{\mu^{*}>\hat{\mu}^{*}(t),\text{KL}(\hat{\mu}^{*}(t),\mu^{*})\geqslant4c\frac{\log(t)}{N^{*}(t)}+\frac{4\log(t)}{N^{*}(t)}R(c,t)-4\delta_{1}\}\\
    \leqslant&\mathbb{P}\{\mu^{*}>\hat{\mu}^{*}(t),\text{KL}(\hat{\mu}^{*}(t),\mu^{*})\geqslant4c\frac{\log(t)}{N^{*}(t)}-\frac{4\log(t)}{N^{*}(t)}\delta-4\delta_{1}\}\\
    \leqslant&\exp\{-4c\log(t)+4\delta\log(t)+4N^{*}(t)\delta_{1}\}
\end{align}
We now have
\begin{align}
    \mathbb{P}\{\mu^{*}>u^{*}(t)\}\leqslant\frac{e^{4N^{*}(t)\delta_{1}}}{t^{4(c-\delta)}}\leqslant\frac{e^{O(N^{*}(t)^{-\frac{1}{2}})}}{t^{4(c-\delta)}}\leqslant\frac{m_{1}}{t^{4(c-\delta)}}
\end{align}
for some finite $m_{1}>0$.\\
\indent For $t\leqslant\max(T_{1},T_{2})$, there must exist some $m_{2}>0$ such that
\begin{align}
    \mathbb{P}\{\mu^{*}>u^{*}(t)\}\leqslant\frac{m_{2}}{t^{4(c-\delta)}}
\end{align}
Therefore for $M=\max(m_{1},m_{2})$, the following result holds
\begin{align}
    \mathbb{P}\{\mu^{*}>u^{*}(t)\}\leqslant\frac{M}{t^{4(c-\delta)}}
\end{align}
and $c>\frac{1}{4}+\delta$ implies the summation
\begin{align}
    \sum_{t=1}^{\infty}\mathbb{P}\{\mu^{*}>u^{*}(t)\}\leqslant\sum_{t=1}^{\infty}\frac{M}{t^{4(c-\delta)}}=O(1)
\end{align}

\end{proof}

\begin{lemma}
For any $\epsilon>0$, then
\begin{align*}
    &\mathbb{E}[\sum_{t=1}^{T}\mathbb{I}\{A_{t}=i, \mu^{*}\leqslant u^{*}(t)\}]\\
    \leqslant& C_{1}(\epsilon)\log(T)+\frac{(C_{2}(\epsilon)H^{2}(\mu^{*},\mu_{i}))^{-1}}{T^{2C_{1}(\epsilon)C_{2}(\epsilon)H^{2}(\mu^{*},\mu_{i})}}+\frac{e^{-2H^{2}(\mu^{*}, \mu_{i})}}{1-e^{-2H^{2}(\mu^{*}, \mu_{i})}}
\end{align*}
where $C_{1}(\epsilon)=-\frac{c}{\log(1-\frac{H^{2}(\mu^{*},\mu_{i})}{1+\epsilon})}>0$ and $C_{2}(\epsilon)=\frac{(\sqrt{1+\epsilon}-1)^{2}}{1+\epsilon}>0$.
\end{lemma}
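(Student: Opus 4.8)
The plan is to bound $\mathbb{E}\big[\sum_{t=1}^{T}\mathbb{I}\{A_t=i,\ \mu^*\leq u^*(t)\}\big]$ by splitting the event $\{A_t=i,\ \mu^*\leq u^*(t)\}$ according to where the empirical mean $\hat\mu_i(t)$ of the suboptimal arm lies and how many times arm $i$ has been pulled. The starting point is that $A_t=i$ forces $u_i(t)\geq u^*(t)$, so on the event in question $\mu^*\leq u_i(t)$; and since $q\mapsto H^2(\hat\mu_i(t),q)$ is continuous, increasing for $q>\hat\mu_i(t)$, and vanishes at $q=\hat\mu_i(t)$ (a consequence of the convexity of the cumulant function $\psi$, as verified for the binomial case in the main text), whenever additionally $\hat\mu_i(t)<\mu^*$ the definition of $u_i(t)$ gives $H^2(\hat\mu_i(t),\mu^*)\leq 1-\exp\{-c\log(t)/N_i(t)\}$.

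First I would peel off the rounds with $\hat\mu_i(t)\geq\mu^*$ and bound their expected count by $\sum_{m\geq 1}\mathbb{P}\{\hat\mu_{i,m}\geq\mu^*\}$, where $\hat\mu_{i,m}$ is the empirical mean of the first $m$ i.i.d.\ draws of arm $i$; the exponential-family Chernoff bound $\mathbb{P}\{\hat\mu_{i,m}\geq\mu^*\}\leq e^{-m\,\mathrm{KL}(\mu^*,\mu_i)}$ combined with $\mathrm{KL}(\mu^*,\mu_i)\geq -2\log\!\big(1-H^2(\mu^*,\mu_i)\big)\geq 2H^2(\mu^*,\mu_i)$ (the first inequality being the exponential-family estimate already invoked in the proof of Lemma~B.1) makes this a geometric series summing to $\frac{e^{-2H^2(\mu^*,\mu_i)}}{1-e^{-2H^2(\mu^*,\mu_i)}}$, the last term of the claim.

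On the remaining rounds, where $\hat\mu_i(t)<\mu^*$, I would further split on whether $N_i(t)\leq C_1(\epsilon)\log(T)$. Because $N_i(\cdot)$ increases by one at each pull of arm $i$, at most $\lfloor C_1(\epsilon)\log(T)\rfloor$ rounds have $N_i(t)\leq C_1(\epsilon)\log(T)$, which gives the first term $C_1(\epsilon)\log(T)$. For the rounds with $N_i(t)>C_1(\epsilon)\log(T)\geq C_1(\epsilon)\log(t)$, rearranging the definition of $C_1(\epsilon)$ yields $1-\exp\{-c\log(t)/N_i(t)\}<\frac{H^2(\mu^*,\mu_i)}{1+\epsilon}$, hence $H(\hat\mu_i(t),\mu^*)<\frac{1}{\sqrt{1+\epsilon}}H(\mu^*,\mu_i)$; since $H$ is a genuine metric, the triangle inequality $H(\hat\mu_i(t),\mu_i)\geq H(\mu^*,\mu_i)-H(\hat\mu_i(t),\mu^*)$ forces $H(\hat\mu_i(t),\mu_i)>\big(1-\tfrac{1}{\sqrt{1+\epsilon}}\big)H(\mu^*,\mu_i)$, i.e.\ $H^2(\hat\mu_i(t),\mu_i)>C_2(\epsilon)H^2(\mu^*,\mu_i)$ — this is precisely how $C_2(\epsilon)$ enters. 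Monotonicity of $H^2$ in its first argument also rules out $\hat\mu_i(t)\leq\mu_i$ on this sub-event (otherwise $H^2(\hat\mu_i(t),\mu^*)\geq H^2(\mu_i,\mu^*)>\frac{H^2(\mu^*,\mu_i)}{1+\epsilon}$, contradicting $N_i(t)>C_1(\epsilon)\log(t)$), so the deviation of $\hat\mu_i(t)$ from $\mu_i$ is one-sided. Using that $\{A_t=i,\ N_i(t)=m\}$ holds for at most one $t$, I would bound the expected number of such rounds by $\sum_{m> C_1(\epsilon)\log(T)}\mathbb{P}\{\hat\mu_{i,m}>\mu_i,\ H^2(\hat\mu_{i,m},\mu_i)>C_2(\epsilon)H^2(\mu^*,\mu_i)\}$; via $\mathrm{KL}(\hat\mu_{i,m},\mu_i)\geq 2H^2(\hat\mu_{i,m},\mu_i)$ and the concentration inequality $\mathbb{P}\{\hat\mu(n)>\mu,\ \mathrm{KL}(\hat\mu(n),\mu)>f(n)/n\}\leq e^{-f(n)}$ applied with $f(m)=2mC_2(\epsilon)H^2(\mu^*,\mu_i)$, each summand is at most $e^{-2mC_2(\epsilon)H^2(\mu^*,\mu_i)}$, and summing this geometric tail from $m=\lceil C_1(\epsilon)\log(T)\rceil$, together with $\frac{1}{1-e^{-x}}\leq\frac{2}{x}$, delivers the middle term $\frac{\big(C_2(\epsilon)H^2(\mu^*,\mu_i)\big)^{-1}}{T^{\,2C_1(\epsilon)C_2(\epsilon)H^2(\mu^*,\mu_i)}}$ up to harmless constants. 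Adding the three contributions gives the stated inequality.

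The step I expect to be the main obstacle is the passage from the time-indexed indicator sum $\sum_t\mathbb{I}\{A_t=i,\dots\}$ to the pull-indexed probability sum $\sum_m\mathbb{P}\{\dots\}$: one has to justify that the empirical mean recorded at the round carrying the $m$-th pull of arm $i$ has the law of $\hat\mu_{i,m}$ built from $m$ fixed i.i.d.\ samples, despite the adaptive and random pull times, and that each pull index is used at most once. The standard remedy is that the reward stream of a fixed arm is i.i.d.\ independently of the allocation rule, so the adaptive event on the $m$-th-pull round is dominated by the corresponding deterministic event on the first $m$ samples. The remaining ingredients — monotonicity of $H^2$ in each argument and the bound $-\log(1-H^2)\leq\tfrac12\mathrm{KL}$ outside the binomial special case — are routine facts about one-parameter exponential families.
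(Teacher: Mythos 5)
Your proposal is correct and follows essentially the same route as the paper: the same decomposition into the events $\{\hat{\mu}_{i}(t)\geqslant\mu^{*}\}$ (handled by a geometric series, the paper's Lemma B.4) and $\{\hat{\mu}_{i}(t)<\mu^{*}\}$ (handled by the $C_{1}(\epsilon)\log(T)$ pull-count split, the Hellinger triangle inequality yielding $C_{2}(\epsilon)$, and the KL concentration inequality, the paper's Lemma B.5). The only cosmetic differences are that you invoke a direct Chernoff bound where the paper routes through $H^{2}\leqslant\tfrac{1}{2}\mathrm{KL}$, and you apply the triangle inequality directly to $\hat{\mu}_{i}(t)$ rather than through the paper's intermediate point $\mu^{\prime}$ with $(1+\epsilon)H^{2}(\mu^{*},\mu^{\prime})=H^{2}(\mu^{*},\mu_{i})$; both give the same constant.
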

\begin{proof}
Arm $i$ is sub-optimal with expected reward $\mu_{i}$ and $\hat{\mu}_{i}(t)$ is the M.L.E. for $\mu_{i}$ at $t$,. Then we have
\begin{align}
    &\mathbb{E}[\sum_{t=1}^{T}\mathbb{I}\{A_{t}=i, \mu^{*}\leqslant u^{*}(t)\}]\\
    =&\mathbb{E}[\sum_{t=1}^{T}\mathbb{I}\{A_{t}=i, \mu^{*}\leqslant u^{*}(t), \mu^{*}\leqslant\hat{\mu}_{i}(t)\}]+\\
    &\mathbb{E}[\sum_{t=1}^{T}\mathbb{I}\{A_{t}=i, \mu^{*}\leqslant u^{*}(t), \mu^{*}>\hat{\mu}_{i}(t)\}]\\
    &\leqslant C_{1}(\epsilon)\log(T)+\frac{(C_{2}(\epsilon)H^{2}(\mu^{*},\mu_{i}))^{-1}}{T^{2C_{1}(\epsilon)C_{2}(\epsilon)H^{2}(\mu^{*},\mu_{i})}}+\frac{e^{-2H^{2}(\mu^{*}, \mu_{i})}}{1-e^{-2H^{2}(\mu^{*}, \mu_{i})}}
\end{align}
(B.28) is according to Lemma B.4 and Lemma B.5.

\end{proof}

\begin{lemma}
\begin{align*}
    \mathbb{E}[\sum_{t=1}^{T}\mathbb{I}\{A_{t}=i, \mu^{*}\leqslant u^{*}(t), \mu^{*}\leqslant\hat{\mu}_{i}(t)\}]
    \leqslant\frac{e^{-2H^{2}(\mu^{*}, \mu_{i})}(1-e^{-2TH^{2}(\mu^{*}, \mu_{i})})}{1-e^{-2H^{2}(\mu^{*}, \mu_{i})}}
\end{align*}
\end{lemma}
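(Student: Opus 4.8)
The plan is to discard the conditions $A_{t}=i$ and $\mu^{*}\leqslant u^{*}(t)$ (retaining $A_{t}=i$ only long enough to re-index the sum by the number of pulls of arm $i$) and to estimate instead the purely statistical quantity $\mathbb{P}\{\hat{\mu}_{i,n}\geqslant\mu^{*}\}$, where $\hat{\mu}_{i,n}$ denotes the empirical mean of the first $n$ i.i.d.\ rewards drawn from $P_{\theta_{i}}$. Since each time step $t$ with $A_{t}=i$ corresponds to a distinct pull count $n=N_{i}(t)\in\{1,\dots,N_{i}(T)\}$, at which step $\hat{\mu}_{i}(t)=\hat{\mu}_{i,n}$, I would first write
\begin{equation*}
    \sum_{t=1}^{T}\mathbb{I}\{A_{t}=i,\ \mu^{*}\leqslant u^{*}(t),\ \mu^{*}\leqslant\hat{\mu}_{i}(t)\}\ \leqslant\ \sum_{n=1}^{N_{i}(T)}\mathbb{I}\{\hat{\mu}_{i,n}\geqslant\mu^{*}\}\ \leqslant\ \sum_{n=1}^{T}\mathbb{I}\{\hat{\mu}_{i,n}\geqslant\mu^{*}\},
\end{equation*}
so that after taking expectations the left-hand side is bounded by $\sum_{n=1}^{T}\mathbb{P}\{\hat{\mu}_{i,n}\geqslant\mu^{*}\}$.

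The next step is a Chernoff-type tail bound on each term. Because $\mu_{i}<\mu^{*}$ and, in a one-parameter exponential family, $x\mapsto KL(x,\mu_{i})$ is increasing for $x\geqslant\mu_{i}$, the event $\{\hat{\mu}_{i,n}\geqslant\mu^{*}\}$ is contained in $\{\hat{\mu}_{i,n}>\mu_{i},\ KL(\hat{\mu}_{i,n},\mu_{i})\geqslant KL(\mu^{*},\mu_{i})\}$; applying the concentration inequality (B.1) with $f(n)=n\,KL(\mu^{*},\mu_{i})$ then gives $\mathbb{P}\{\hat{\mu}_{i,n}\geqslant\mu^{*}\}\leqslant e^{-nKL(\mu^{*},\mu_{i})}$. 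I would then pass to the squared Hellinger distance using the same exponential-family comparison that underlies Lemma B.1, namely $2H^{2}(\mu^{*},\mu_{i})\leqslant-2\log\!\big(1-H^{2}(\mu^{*},\mu_{i})\big)\leqslant KL(\mu^{*},\mu_{i})$ (the first inequality because $x\leqslant-\log(1-x)$ on $[0,1)$), which yields $\mathbb{P}\{\hat{\mu}_{i,n}\geqslant\mu^{*}\}\leqslant e^{-2nH^{2}(\mu^{*},\mu_{i})}$.

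Finally I would sum the geometric series with ratio $r=e^{-2H^{2}(\mu^{*},\mu_{i})}\in(0,1)$, obtaining $\sum_{n=1}^{T}r^{n}=r(1-r^{T})/(1-r)$, which is precisely the right-hand side of the claim. The step needing the most care is the tail bound in the second paragraph: one must reduce $\{\hat{\mu}_{i,n}\geqslant\mu^{*}\}$ to the event appearing in (B.1) using only the deterministic monotonicity of $KL(\cdot,\mu_{i})$ (not any data-dependent radius), and one must invoke the KL-to-Hellinger inequality in the correct direction for exponential families; the re-indexing (valid pathwise, since $N_{i}(T)\leqslant T$, so no measurability subtlety arises once one dominates by $\sum_{n=1}^{T}$) and the geometric summation are routine.
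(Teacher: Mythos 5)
Your proposal is correct and follows essentially the same route as the paper's proof of this lemma: drop the $u^{*}(t)$ condition, re-index by the pull count so each term appears at most once, reduce the event $\{\hat{\mu}_{i,n}\geqslant\mu^{*}\}$ to a KL-deviation event covered by the concentration inequality (B.1), and sum the resulting geometric series $\sum_{n}e^{-2nH^{2}(\mu^{*},\mu_{i})}$. The only cosmetic difference is where the KL-versus-Hellinger comparison enters: the paper converts the event into $\{KL(\hat{\mu}_{i}(s),\mu_{i})\geqslant 2H^{2}(\mu^{*},\mu_{i})\}$ via monotonicity of $H^{2}$ and $H^{2}\leqslant\frac{1}{2}KL$ before applying (B.1), whereas you apply (B.1) with radius $KL(\mu^{*},\mu_{i})$ and then weaken the exponent using $KL(\mu^{*},\mu_{i})\geqslant 2H^{2}(\mu^{*},\mu_{i})$ — both yield the identical bound.
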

\begin{proof}
\begin{align}
    &\mathbb{E}[\sum_{t=1}^{T}\mathbb{I}\{A_{t}=i, \mu^{*}\leqslant u^{*}(t), \mu^{*}\leqslant\hat{\mu}_{i}(t)\}]\\
    \leqslant&\mathbb{E}[\sum_{t=1}^{T}\mathbb{I}\{A_{t}=i, \mu^{*}\leqslant\hat{\mu}_{i}(t), H^{2}(\mu^{*}, \mu_{i})\leqslant H_{2}(\hat{\mu}_{i}(t), \mu_{i})\}]\\
    \leqslant&\mathbb{E}[\sum_{t=1}^{T}\mathbb{I}\{A_{t}=i, \mu^{*}\leqslant\hat{\mu}_{i}(t), H^{2}(\mu^{*}, \mu_{i})\leqslant \frac{1}{2}KL(\hat{\mu}_{i}(t), \mu_{i})\}]\\
    =&\mathbb{E}[\sum_{t=1}^{T}\sum_{s=1}^{t}\mathbb{I}\{A_{t}=i, N_{i}(t)=s, \mu^{*}\leqslant\hat{\mu}_{i}(s), H^{2}(\mu^{*}, \mu_{i})\leqslant \frac{1}{2}KL(\hat{\mu}_{i}(s), \mu_{i})\}]\\
    =&\mathbb{E}[\sum_{s=1}^{T}\sum_{t=s}^{T}\mathbb{I}\{A_{t}=i, N_{i}(t)=s\}\mathbb{I}\{\mu^{*}\leqslant\hat{\mu}_{i}(s), H^{2}(\mu^{*}, \mu_{i})\leqslant \frac{1}{2}KL(\hat{\mu}_{i}(s), \mu_{i})\}]\\
    =&\mathbb{E}[\sum_{s=1}^{T}\mathbb{I}\{\mu^{*}\leqslant\hat{\mu}_{i}(s), H^{2}(\mu^{*}, \mu_{i})\leqslant \frac{1}{2}KL(\hat{\mu}_{i}(s), \mu_{i})\}\sum_{t=s}^{T}\mathbb{I}\{A_{t}=i, N_{i}(t)=s\}]
\end{align}
Notice in (B.34) $\sum_{t=s}^{T}\mathbb{I}\{A_{t}=i, N_{i}(t)=s\}]\leqslant1$ and thus 
\begin{align}
    &\mathbb{E}[\sum_{t=1}^{T}\mathbb{I}\{A_{t}=i, \mu^{*}\leqslant u^{*}(t), \mu^{*}\leqslant\hat{\mu}_{i}(t)\}]\\
    \leqslant&\mathbb{E}[\sum_{s=1}^{T}\mathbb{I}\{\mu^{*}\leqslant\hat{\mu}_{i}(s), H^{2}(\mu^{*}, \mu_{i})\leqslant \frac{1}{2}KL(\hat{\mu}_{i}(s), \mu_{i})\}]\\
    =&\sum_{s=1}^{T}\mathbb{P}\{\mu^{*}\leqslant\hat{\mu}_{i}(s), H^{2}(\mu^{*}, \mu_{i})\leqslant \frac{1}{2}KL(\hat{\mu}_{i}(s), \mu_{i})\}\\
    \leqslant&\sum_{s=1}^{T}e^{-2sH^{2}(\mu^{*}, \mu_{i})}\\
    =&\frac{e^{-2H^{2}(\mu^{*}, \mu_{i})}(1-e^{-2TH^{2}(\mu^{*}, \mu_{i})})}{1-e^{-2H^{2}(\mu^{*}, \mu_{i})}}
\end{align}
It is easy to show
\begin{align}
    \lim_{T\to\infty}\mathbb{E}[\sum_{t=1}^{T}\mathbb{I}\{A_{t}=i, \mu^{*}\leqslant u^{*}(t), \mu^{*}\leqslant\hat{\mu}_{i}(t)\}]=\frac{e^{-2H^{2}(\mu^{*}, \mu_{i})}}{1-e^{-2H^{2}(\mu^{*}, \mu_{i})}}
\end{align}
which is a problem-dependent constant.
\end{proof}

\begin{lemma}
For any $\epsilon>0$, then 
\begin{align*}
    \mathbb{E}[\sum_{t=1}^{T}\mathbb{I}\{A_{t}=i, \mu^{*}\leqslant u^{*}(t), \mu^{*}>\hat{\mu}_{i}(t)\}]\leqslant C_{1}(\epsilon)\log(T)+\frac{(C_{2}(\epsilon)H^{2}(\mu^{*},\mu_{i}))^{-1}}{T^{2C_{1}(\epsilon)C_{2}(\epsilon)H^{2}(\mu^{*},\mu_{i})}}
\end{align*}
\raggedright where $C_{1}(\epsilon)=-\frac{c}{\log(1-\frac{H^{2}(\mu^{*},\mu_{i})}{1+\epsilon})}>0$ and $C_{2}(\epsilon)=\frac{(\sqrt{1+\epsilon}-1)^{2}}{1+\epsilon}>0$.\\
\end{lemma}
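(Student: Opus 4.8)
The plan is to reduce the indicator sum to a concentration statement about the empirical mean of arm $i$ after a fixed number of pulls, and then to split according to whether $N_i(t)$ is small or large relative to $C_1(\epsilon)\log T$.

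First I would argue that when the event $\{A_t=i,\ \mu^{*}\le u^{*}(t),\ \mu^{*}>\hat\mu_i(t)\}$ occurs, pulling arm $i$ means its index dominates, so $u_i(t)\ge u^{*}(t)\ge\mu^{*}$; since in a one-parameter exponential family $q\mapsto H^{2}(\hat\mu_i(t),q)$ is increasing for $q>\hat\mu_i(t)$ and $\hat\mu_i(t)<\mu^{*}\le u_i(t)$, this forces $H^{2}(\hat\mu_i(t),\mu^{*})\le 1-e^{-c\log t/N_i(t)}$, equivalently $N_i(t)\le \frac{c\log t}{-\log(1-H^{2}(\hat\mu_i(t),\mu^{*}))}$. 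Thus the sum is at most $\mathbb{E}[\sum_{t}\mathbb{I}\{A_t=i,\ \mu^{*}>\hat\mu_i(t),\ N_i(t)\le \frac{c\log t}{-\log(1-H^{2}(\hat\mu_i(t),\mu^{*}))}\}]$. I would then peel on $N_i(t)=s$ with the now-standard device $\sum_{t=s}^{T}\mathbb{I}\{A_t=i,N_i(t)=s\}\le 1$ (exactly as in Lemma B.4) and use $\log t\le\log T$, obtaining $\sum_{s=1}^{T}\mathbb{P}\{\hat\mu_i(s)<\mu^{*},\ s\le \frac{c\log T}{-\log(1-H^{2}(\hat\mu_i(s),\mu^{*}))}\}$, where $\hat\mu_i(s)$ is the empirical mean of the first $s$ i.i.d.\ rewards of arm $i$.

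Next I would cut the sum over $s$ at $s=C_1(\epsilon)\log T$. For $s\le C_1(\epsilon)\log T$ there are at most $C_1(\epsilon)\log T$ terms, each a probability, so this block is $\le C_1(\epsilon)\log T$ --- the first summand. For $s>C_1(\epsilon)\log T=\tfrac{c\log T}{-\log(1-H^{2}(\mu^{*},\mu_i)/(1+\epsilon))}$ the constraint forces $H^{2}(\hat\mu_i(s),\mu^{*})<\tfrac{H^{2}(\mu^{*},\mu_i)}{1+\epsilon}$. Here is the key point: using that $H^{2}(\mu^{*},\cdot)$ is monotone on either side of $\mu^{*}$ in the family, $\hat\mu_i(s)<\mu^{*}$ first pins $\hat\mu_i(s)\in[\mu_i,\mu^{*})$; then the triangle inequality for the Hellinger \emph{metric}, $H(\mu_i,\mu^{*})\le H(\mu_i,\hat\mu_i(s))+H(\hat\mu_i(s),\mu^{*})$, gives $H(\hat\mu_i(s),\mu_i)>\left(1-\tfrac{1}{\sqrt{1+\epsilon}}\right)H(\mu_i,\mu^{*})$, i.e.\ $H^{2}(\hat\mu_i(s),\mu_i)>C_2(\epsilon)H^{2}(\mu^{*},\mu_i)$. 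Through the exponential-family identity $-\log(1-H^{2})\le\tfrac12 KL$ this becomes $KL(\hat\mu_i(s),\mu_i)>-2\log(1-C_2(\epsilon)H^{2}(\mu^{*},\mu_i))$, so the concentration inequality $\mathbb{P}\{\hat\mu(s)>\mu,\ KL(\hat\mu(s),\mu)>f(s)/s\}\le e^{-f(s)}$ with $f(s)=-2s\log(1-C_2(\epsilon)H^{2}(\mu^{*},\mu_i))$ bounds the $s$-th term by $(1-C_2(\epsilon)H^{2}(\mu^{*},\mu_i))^{2s}$. Summing this geometric series from $s>C_1(\epsilon)\log T$ and using $(1-x)^{2C_1(\epsilon)\log T}=T^{2C_1(\epsilon)\log(1-x)}\le T^{-2C_1(\epsilon)x}$ together with $1-(1-x)^{2}\ge x$ for $x=C_2(\epsilon)H^{2}(\mu^{*},\mu_i)\in(0,1)$ yields the second summand $\frac{(C_2(\epsilon)H^{2}(\mu^{*},\mu_i))^{-1}}{T^{2C_1(\epsilon)C_2(\epsilon)H^{2}(\mu^{*},\mu_i)}}$; adding the two blocks completes the proof.

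I expect the large-$s$ step to be the main obstacle: one must turn ``the empirical Hellinger ball around $\hat\mu_i(s)$ still reaches $\mu^{*}$'' into an honest \emph{lower} bound on $H^{2}(\hat\mu_i(s),\mu_i)$, and this is exactly where the triangle-inequality (metric) property of the squared Hellinger distance is indispensable and where the precise constants $C_1(\epsilon),C_2(\epsilon)$ are born; a purely KL-type argument would not close here. One also has to check the exponential-family monotonicity facts that justify $\hat\mu_i(s)\in[\mu_i,\mu^{*})$ and the reduction in the first step. The remaining ingredients --- the peeling, the geometric sum, and the $H^{2}$-to-$KL$ conversion --- are routine, and the $t\le|K|$ initialization rounds contribute at most a constant and can be dropped.
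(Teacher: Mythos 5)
Your proposal is correct and follows essentially the same route as the paper's proof: the same reduction of the event $\{A_t=i,\ \mu^{*}\le u^{*}(t)\}$ to the Hellinger-ball condition $H^{2}(\hat\mu_i(s),\mu^{*})<1-e^{-c\log t/s}$, the same peeling over $N_i(t)=s$, the same cutoff at $K_T=C_1(\epsilon)\log T$, the same Hellinger triangle inequality producing $C_2(\epsilon)$, and the same $H^{2}$-to-$KL$ conversion feeding the concentration inequality and the geometric series. The only cosmetic difference is that the paper routes the triangle-inequality step through an intermediate point $\mu'$ defined by $(1+\epsilon)H^{2}(\mu^{*},\mu')=H^{2}(\mu^{*},\mu_i)$ and sums $e^{-2sH^{2}(\mu_i,\mu')}$, whereas you apply the triangle inequality directly to $\hat\mu_i(s)$ and sum $(1-C_2(\epsilon)H^{2}(\mu^{*},\mu_i))^{2s}$; both yield the identical final bound.
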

\begin{proof}
Similar to the proof for Lemma B.4 we can have 
\begin{align}
    &\mathbb{E}[\sum_{t=1}^{T}\mathbb{I}\{A_{t}=i, \mu^{*}\leqslant u^{*}(t), \mu^{*}>\hat{\mu}_{i}(t)\}]\\
    \leqslant&\mathbb{E}[\sum_{t=1}^{T}\sum_{s=1}^{t}\mathbb{I}\{A_{t}=i, N_{i}(t)=s, H^{2}(\mu^{*}, \hat{\mu}_{i}(s))<1-e^{-c\frac{\log(t)}{s}}\}]\\
    \leqslant&\mathbb{E}[\sum_{s=1}^{T}\sum_{t=s}^{T}\mathbb{I}\{A_{t}=i, N_{i}(t)=s\}\mathbb{I}\{H^{2}(\mu^{*}, \hat{\mu}_{i}(s))<1-e^{-c\frac{\log(T)}{s}}\}]\\
    \leqslant&\mathbb{E}[\sum_{s=1}^{T}\mathbb{I}\{H^{2}(\mu^{*}, \hat{\mu}_{i}(s))<1-e^{-c\frac{\log(T)}{s}}\}]\\
    =&\sum_{s=1}^{T}\mathbb{P}\{H^{2}(\mu^{*}, \hat{\mu}_{i}(s))<1-e^{-c\frac{\log(T)}{s}}\}\\
    \leqslant&K_{T}+\sum_{s=K_{T}+1}^{T}\mathbb{P}\{H^{2}(\mu^{*}, \hat{\mu}_{i}(s))<1-e^{-c\frac{\log(T)}{K_{T}}}\}
\end{align}
where $K_{T}=C_{1}(\epsilon)\log{(T)}$ and $C_{1}(\epsilon)=-\frac{c}{\log(1-\frac{H^{2}(\mu^{*},\mu_{i})}{1+\epsilon})}>0$. Then substitute this into (B.46). Then
\begin{align}
    &\mathbb{E}[\sum_{t=1}^{T}\mathbb{I}\{A_{t}=i, \mu^{*}\leqslant u^{*}(t), \mu^{*}>\hat{\mu}_{i}(t)\}]\\
    \leqslant&C_{1}(\epsilon)\log(T)+\sum_{s=K_{T}+1}^{T}\mathbb{P}\{H^{2}(\mu^{*}, \hat{\mu}_{i}(s))<\frac{H^{2}(\mu^{*},\mu_{i})}{1+\epsilon}\}
\end{align}
There exist $\mu^{\prime}\in(\mu_{i},\mu^{*})$ such that $(1+\epsilon)H^{2}(\mu^{*},\mu^{\prime})=H^{2}(\mu^{*},\mu_{i})$. Then $H^{2}(\mu^{*}, \hat{\mu}_{i}(s))<\frac{H^{2}(\mu^{*},\mu_{i})}{1+\epsilon}$ implies $\mu_{i}<\mu^{\prime}<\hat{\mu}_{i}(s)$ and $H^{2}(\mu_{i},\hat{\mu}_{i}(s))> H^{2}(\mu_{i},\mu^{\prime})$. The second term in (B.48) becomes
\begin{align}
    &\sum_{s=K_{T}+1}^{\infty}\mathbb{P}\{H^{2}(\mu^{*}, \hat{\mu}_{i}(s))<\frac{H^{2}(\mu^{*},\mu_{i})}{1+\epsilon}\}\\
    \leqslant&\sum_{s=K_{T}+1}^{\infty}\mathbb{P}\{\hat{\mu}_{i}(s)>\mu_{i}, H^{2}(\mu_{i},\hat{\mu}_{i}(s))>H^{2}(\mu_{i},\mu^{\prime})\}\\
    \leqslant&\sum_{s=K_{T}+1}^{\infty}\mathbb{P}\{\hat{\mu}_{i}(s)>\mu_{i}, \frac{1}{2}KL(\hat{\mu}_{i}(s),\mu_{i})>H^{2}(\mu_{i},\mu^{\prime})\}\\
    \leqslant&\sum_{s=K_{T}+1}^{\infty}e^{-2sH^{2}(\mu_{i},\mu^{\prime})}\\
    =&\frac{e^{-2(K_{T}+1)H^{2}(\mu_{i},\mu^{\prime})}}{1-e^{-2H^{2}(\mu_{i},\mu^{\prime})}}
\end{align}   
The numerator of (B.53) $e^{-2(K_{T}+1)H^{2}(\mu_{i},\mu^{\prime})}\leqslant T^{-2C_{1}(\epsilon)H^{2}(\mu_{i},\mu_{\prime})}$. For the denominator of (B.53), we have $1-e^{-2H^{2}(\mu_{i},\mu^{\prime})}>H^{2}(\mu_{i},\mu^{\prime})$ since $1-e^{-x}=1-(1-x+\frac{x^{2}}{2}-...)>x-\frac{x^{2}}{2}>x-\frac{x}{2}$ if $0<x<1$. Therefore
\begin{align}
    \sum_{s=K_{T}+1}^{\infty}\mathbb{P}\{H^{2}(\mu^{*}, \hat{\mu}_{i}(s))<\frac{H^{2}(\mu^{*},\mu_{i})}{1+\epsilon}\}\leqslant\frac{H^{2}(\mu_{i},\mu^{\prime})^{-1}}{T^{2C_{1}(\epsilon)H^{2}(\mu_{i},\mu_{\prime})}}
\end{align}
Since the squared Hellinger distance is a metric, we have
\begin{align}
    H(\mu^{*},\mu_{i})=&\sqrt{1+\epsilon}H((\mu^{*},\mu^{\prime}))\\
    \geqslant&\sqrt{1+\epsilon}(H(\mu^{*},\mu_{i})-H(\mu^{\prime},\mu_{i}))
\end{align}
This implies
\begin{align}
    H^{2}(\mu^{\prime},\mu_{i})\geqslant\frac{(\sqrt{1+\epsilon}-1)^{2}}{1+\epsilon}H^{2}(\mu^{*},\mu_{i})=C_{2}(\epsilon)H^{2}(\mu^{*},\mu_{i})
\end{align}
Finally, we conclude that
\begin{align}
    \mathbb{E}[\sum_{t=1}^{T}\mathbb{I}\{A_{t}=i, \mu^{*}\leqslant u^{*}(t), \mu^{*}>\hat{\mu}_{i}(t)\}]\leqslant\frac{(C_{2}(\epsilon)H^{2}(\mu^{*},\mu_{i}))^{-1}}{T^{2C_{1}(\epsilon)C_{2}(\epsilon)H^{2}(\mu^{*},\mu_{i})}}
\end{align}
where $C_{1}(\epsilon)>0$ and $C_{2}(\epsilon)>0$.
\end{proof}

\end{document}